\newtheorem{theorem}{Theorem}[]
\newtheorem{example}{Example}[]
\newtheorem{lemma}[theorem]{Lemma}
\newtheorem{problem}[theorem]{Problem}
\DeclareDocumentCommand{\states}{D<>{} O{}  t'}{\mathsf{S}_{#1}^{\IfBooleanTF{#3}{\prime~#2}{#2}}}
\DeclareDocumentCommand{\state}{D<>{} O{}  t'}{\mathsf{s}_{#1}^{\IfBooleanTF{#3}{\prime #2}{#2}}}
\DeclareDocumentCommand{\action}{D<>{} O{} t'}{\mathsf{a}_{#1}^{\IfBooleanTF{#3}{\prime#2}{#2}}}
\DeclareDocumentCommand{\Av}{D<>{} O{} t' D(){}}{\mathsf{Av}_{#1}^{\IfBooleanTF{#3}{\prime}{}#2}\ifthenelse{\isempty{#4}}{}{(#4)}}
\newcommand{\trans}{\mathbb{T}}
\newcommand{\transR}{\mathsf{R}}
\newcommand{\initstate}{s_0}
\newcommand{\av}{\mathsf{Av}}
\NewDocumentCommand{\actions}{d()}{{\IfNoValueTF{#1}{\mathsf{Act}}{\fun{\mathsf{Act}}{#1}}}}
\newcommand{\Reals}{\mathbb{R}}
\newcommand{\rat}{\mathbb{Q}}
\newcommand{\nat}{\mathbb{N}}
\newcommand{\realpos}{\mathbb{R}_{\geq 0}}
\newcommand{\ratpos}{\mathbb{Q}_{\geq 0}}
\newcommand{\Distributions}{\mathcal{D}}
\newcommand{\dist}{\ensuremath{p} }
\newcommand{\supp}{\ensuremath{\mathsf{Supp}} }
\newcommand{\Mdp}{\mathcal{M}}
\newcommand{\atomicprop}{\textbf{AP}}
\newcommand{\labelling}{L}
\newcommand{\embeddedMdp}{\mathcal{M}_{\mathcal{E}}}
\newcommand{\transP}{P}
\newcommand{\irun}{Runs^{\Mdp}}
\newcommand{\frun}{FRuns^{\Mdp}}
\newcommand{\oautomata}{\mathcal{A}}
\newcommand{\alphabets}{\Sigma}
\newcommand{\ostate}{Q}
\newcommand{\oinitstate}{q_0}
\newcommand{\otrans}{\delta}
\newcommand{\acceptingc}{F}
\newcommand{\oruns}{\mathcal{R}_{\oautomata}}
\newcommand{\ctmc}{\Mdp^{[\sigma]}}
\newcommand{\objective}{\mathcal{O}}
\newcommand{\discobjective}{\textsf{DR}}
\newcommand{\avgobjective}{\textsf{AR}}
\newcommand{\unif}{\Mdp_{C}}
\newcommand{\pmdp}{\Mdp \times \oautomata}
\newcommand{\augmdp}{\Mdp^{\zeta}}
\newcommand{\valueq}{\mathcal{Q}_{\sigma}}
\newcommand{\kvaluef}{\Qf^{(k)}}
\newcommand{\newvaluef}{\Qf^{(k+1)}}
\newcommand{\avgreward}{Ar^{\pmdp^{[\sigma]}}}
\newcommand{\dfactor}{\alpha}
\newcommand{\ctmdprate}{\gamma_{\alpha}}
\newcommand{\valuesigma}{v^{\sigma}}
\newcommand{\pmtrx}{P_{\Mdp}}
\newcommand{\drate}{\beta}
\newcommand{\bschedule}{\sigma^*}
\newcommand{\thrate}{\beta_{0}}
\newcommand{\valuestar}{v^{\sigma^*}}
\newcommand{\uMdp}{M_C}
\newcommand{\upmtrx}{P_{\uMdp}}
\newcommand{\udrate}{\ctmdprate^{C}}
\newcommand{\uthrate}{\gamma_{\alpha_0}^{C}}
\newcommand{\ctmdpthrate}{\gamma_{\alpha_{0}}}
\newcommand{\avgrew}{g}
\newcommand{\emdp}{\mathcal{M}_\mathcal{E}}
\newcommand{\PSemSat}{{\sf PSem}}
\newcommand{\PSat}{{\sf PSat}}
\newcommand{\ESemSat}{{\sf ESem}}
\newcommand{\ESat}{{\sf ESat}}
\newcommand{\Ll}{\mathcal{L}}
\newcommand{\set}[1]{\left\{ #1 \right\}}
\newcommand{\bstrategy}{\sigma^*}
\newcommand{\Qf}{\mathcal{Q}_f}
\colorlet{darkgreen}{green!40!black}                                                                   \colorlet{darkblue}{blue!60!black}                                                                     \colorlet{darkred}{red!50!black}                                                                       \colorlet{safecellcolor}{yellow!5}                                                                     \colorlet{goodcellcolor}{green!10}                                                                     \colorlet{badcellcolor}{blue!10}      
\tikzset{
    >=latex,node distance=2cm,on grid,auto, initial text=, 
    box state/.style={draw,rectangle,minimum size=8mm,rounded corners},
    prob state/.style={draw,very thick,shape=circle,darkblue,minimum size=3mm,inner sep=0mm},
    every loop/.style={shorten >=0pt},
    accepting state/.style={double distance=1.2pt, outer sep = 0.6pt+\pgflinewidth},
    accepting dot/.style={above=-2.7pt,circle,fill,darkgreen,inner sep=2pt,radius=1pt}, 
    loop above/.append style={every loop/.append style={out=120, in=60, looseness=6}},
    loop below/.append style={every loop/.append style={out=300, in=240, looseness=6}},
    loop left/.append style={every loop/.append style={out=210, in=150, looseness=6}},
    loop right/.append style={every loop/.append style={out=30, in=330, looseness=6}} 
}  
\title{Reinforcement Learning for Omega-Regular Specifications \\on Continuous-Time MDP}
\author{
    Amin Falah\textsuperscript{\rm 1},
    Shibashis Guha\textsuperscript{\rm 2},
    Ashutosh Trivedi\textsuperscript{\rm 1}
}
\begin{document}
    \maketitle

\begin{abstract}
    Continuous-time Markov decision processes (CTMDPs) are canonical models to express sequential decision-making under dense-time and stochastic environments. 
    When the stochastic evolution of the environment is only available via sampling, model-free reinforcement learning (RL) is the algorithm-of-choice to compute optimal decision sequence. 
    RL, on the other hand, requires the learning objective to be encoded as scalar reward signals.
    Since doing such translations manually is both tedious and error-prone, a number of techniques have been proposed to translate high-level objectives (expressed in logic or automata formalism) to scalar rewards for discrete-time Markov decision processes (MDPs). Unfortunately, no automatic translation exists for CTMDPs.
    
    We consider CTMDP environments against the learning objectives expressed as omega-regular languages. 
    Omega-regular languages generalize regular languages to infinite-horizon specifications and can express properties given in popular linear-time logic LTL.
    To accommodate the dense-time nature of CTMDPs, we consider two
    different semantics of omega-regular objectives: 
    1) \emph{satisfaction semantics} where the goal of the learner is to
    maximize the probability of spending positive time in the good states, and
    2) \emph{expectation semantics} where the goal of the learner is to optimize
    the long-run expected average time spent in the ``good states" of the automaton.
    We present an approach enabling correct translation to scalar reward signals
    that can be readily used by off-the-shelf RL algorithms for CTMDPs. 
    We demonstrate the effectiveness of the proposed algorithms by evaluating it on some popular CTMDP benchmarks with
    omega-regular objectives. 
\end{abstract}

\section{Introduction}
\label{sec:intro}
Reinforcement learning (RL)~\cite{Sutton18} is a sequential optimization approach where a decision maker learns to optimally resolve a sequence of choices based on feedback received from the environment. This feedback often takes the form of rewards and punishments with strength proportional to the fitness of the decisions taken by the agent as judged by the environment towards some higher-level learning objectives.
\emph{This paper develops convergent RL algorithms for continuous-time Markov decision processes (CTMDP) against learning requirements expressed in $\omega$-regular languages~\cite{Baier08}.}


\paragraph{Need for Reward Translation.} 
Due to a combination of factors---including the success of deep neural networks~\cite{Goodfe16}
and a heavy intellectual and monetary investment from the industry and the
academe~\cite{Mnih15,Silver16,Levine16}---RL has emerged as a leading human-AI collaborative design paradigm
where the key role of the human designers reduces to designing the appropriate
scalar reward signals, while the RL algorithm creates an optimal schedule driven by the reward signal.  
Unfortunately, then, the de-facto communication between the human designers and the RL algorithms is quite rigid: it forces the human programmers to think in the language suitable for the learning agents and not in a way that comes naturally to humans: declarative or imperative languages. 
To meet this challenge, a recent
trend is to enable logic~\cite{sadigh2014learning,Bozkurt0ZP20,camacho2019ltl,li2017reinforcement}  and
automatic
structures~\cite{HahnPSSTW19,icarte2018using,IcarteKlassenValenzanoMcIlraith20}
to express learning intent in RL.
The common thread among these approaches is to encode the specification as an automaton based reward structure and derive scalar rewards with every
discrete interaction with the environment.
However, when the problem domain is continuous-time, aforementioned approaches are not
applicable as they support the discrete-time semantics modeled as finite-state Markov decision processes (MDP or DTMDP for emphasis). 

This paper aims to enable the use of RL in unknown CTMDPs against high-level specifications expressed as $\omega$-automata~\cite{VW86,Baier08}.

\vspace{-0.5em}
\paragraph{Continuous-Time Reinforcement Learning.} 
Semi-MDPs~\cite{Baykal2011} model environments where the interaction between the decision maker and the environment may occur at any dense time point.
CTMDPs~\cite{guo2009continuous} are subclasses of semi-Markov decision processes where
the exact time and the resolution of the next
state is governed by an exponential distribution with a \emph{rate} parameter that is dependent on the current state and the action chosen.
The classical RL algorithms for DTMDPs
have been elegantly generalized to CTMDPs for both discounted~\cite{BD94} and average~\cite{das1999solving} objectives.
We employ the Q-learning algorithm for CTMDP~\cite{BD94} to compute optimal
schedules for $\omega$-regular learning objectives.

\paragraph{The $\omega$-Regular Objectives.}
Finite automata on infinite words---or $\omega$-automata---may be equipped with a variety of equally expressive infinitary acceptance conditions (e.g., deterministic Rabin and nondeterministic B\"uchi) with well-understood succinctness and complexity trade-offs. 
From their first application in solving Church's synthesis problem~\cite{thomas2009facets} to becoming the \emph{lingua franca} in expressing specifications of safety-critical systems~\cite{Baier08}, $\omega$-automata are a key part of the computational backbone to automated verification and synthesis.
Linear temporal logic (LTL)~\cite{Baier08} is a popular declarative language to express properties of infinite sequences.
Specifications expressed using $\omega$-automata form a strict superset of specifications expressed as LTL formulas. Given an LTL formula, one can effectively construct an $\omega$-automaton~\cite{VW86}.
For this reason, we focus on $\omega$-automata based specifications.

The expanding role of RL in safety-critical systems has prompted the use of $\omega$-automata in expressing learning objectives due to improved expressiveness and interpretability over scalar rewards.
In this work, we use nondeterministic B\"uchi automata to express $\omega$-regular specifications.

\paragraph{Continuous-Time in B\"uchi Automata.}
B\"uchi automata are finitary structures accepting infinite sequences of letters 
that visit a distinguished set of good (accepting) states infinitely often.  
For scheduling problems over stochastic systems modeled as DTMDPs, the optimal schedules can be specified via schedules that maximize the measure of accepted system
behaviors. 
While for discrete-time system the naturalness of such discrete infinitary
visitation semantics is well-established, for continuous-time systems it is
imperative that the acceptance criterion must heed to the actual time spent in
such good states.  
Two distinct interpretations of good dense-time behavior are natural:
While the focus of the \emph{satisfaction semantics} is on maximizing the
measure of behaviors that visit good states infinitely often, the \emph{expectation semantics} focuses on maximizing the long-run expected time
spent in good states.
We develop RL algorithms for CTMDPs with B\"uchi specifications
under both semantics.

A recent work~\cite{oura2022learning} studies an alternative objective for semi-MDPs against multi-objective specifications composed of an $\omega$-regular objectives (satisfaction semantics) and a risk objective (expected risk). 
The key distinction between \citeauthor{oura2022learning}'s approach and ours (vis-\`a-vis the satisfaction semantics) is that the former is based on bounded synthesis paradigm that requires a bound parameter on co-B\"uchi states visitation and thus reduces the specification to a safety objective (where reward translation is straightforward). In contrast, our approach does not require any bound from the practitioner and is capable of handling general $\omega$-regular objectives.
Moreover, the expectation semantics has not been explored in any existing literature. 



\paragraph{Contributions.}
Our key contributions are as follows:
\begin{enumerate}
    \item 
    We present a novel (expectation) semantics for B\"uchi automata to capture time-critical properties for CTMDPs. 
    \item 
    We present procedures to translate B\"uchi automata with satisfaction and expectation semantics to reward machines \cite{icarte2018using} in a form that enables application of the off-the-shelf CTMDP RL algorithms.
    We show that one needs distinct reward mechanisms for these two semantics, and we
    establish the correctness and effectiveness of these reward translations.
    \item As a by-product of the proofs, we provide  a simplified proof of existence of Blackwell optimal schedules~\cite{puterman2014markov} in CTMDPs based on uniformization. 
    \item 
    We present an experimental evaluation to demonstrate the effectiveness of the proposed approach.
\end{enumerate}

Due to space constraints, the detailed proofs and other omitted information (such as details of the benchmarks) are provided as part of the supplementary material.

 \section{Satisfaction Vs. Expectation Semantics}
\label{sat vs expt}

The following example gives an intuition on satisfaction and expectation semantics.
\begin{example}[Satisfaction or Expectation?] \label{example:1}
The CTMDP shown in Figure~\ref{fig:grid-world} (adapted from \cite{HahnPSSTW19}) represents four zones (represented by $s_0$ to $s_3$ in the figure) on the Mars surface. 
Suppose that a mission to Mars arrives in Zone~$0$ (a known, safe territory) and is expected to explore the terrain in a safe fashion, gather and transmit information, and stay alive to maximize the return on the mission.
For simplicity, assume that Zone $1$ (purple) models a crevasse harmful to the safe operations, while zones $2$ and $3$ are central to exploration mission and are analogous in their information contents.

\begin{figure*}[t]
\noindent\resizebox{0.3\textwidth}{!}{
 \begin{minipage}{0.33\textwidth}
\begin{tikzpicture}[shorten >=1pt]
\begin{scope}
\node (l0) [state,ellipse,initial above, fill = safecellcolor]   {$s_0$};
\node (l1) [state, ellipse, fill = badcellcolor] at (-2.1,0)   {$s_1$};
\node (l2) [state, ellipse, fill = goodcellcolor] at (2.1,0)   {$s_2$};
\node (l3) [state,ellipse, fill = goodcellcolor] at (0,-2.1)   {$s_3$};

\path [->, thick,above=.5cm,align=center]
    (l0) edge node [above] {$b$} node [below, font = \scriptsize]{$\lambda(0,b)-r$}   (l1)
    (l0) edge [bend right]  node [below] {$b,r$}   (l2)
    (l2) edge [bend right]  node [above] {$f$}   (l0)
    (l0) edge [bend right]  node [left] {$a$}   (l3)
    (l3) edge [bend right]  node [right] {$c$}   (l0)
    (l1) edge [loop above]  node [above] {$d$} ()
    (l2) edge [loop above]  node [above] {$e$} ()
    ;
\end{scope}
\end{tikzpicture}     
  \end{minipage}
  }
  \noindent\resizebox{0.3\textwidth}{!}{
  \begin{minipage}{0.33\textwidth}
   \begin{tikzpicture}
    \begin{scope}
    \node (l2) [state] {$q_2$};
    \node (l0) [state,initial] at (-2,2)   {$q_0$};
    \node (l1) [state,accepting] at (2,2)   {$q_1$};
    
\draw[rounded corners,->, thick] (l0) -- (-1, 3) -- node [above] {$(g \land \neg p)$} (1, 3) -- (l1);
\draw[->, thick] (l1) --  node [above] {$(\neg g \land \neg p)$} (l0);
\draw[rounded corners,->, thick] (l0) --  node [left] {$p$} (-2, 0) -- (l2);
\draw[rounded corners,->, thick] (l1) --  node [left] {$p$} (2, 0) -- (l2);
\path [->,thick]
    (l1) edge [loop above] node [above] {$(g \land \neg p)$} () 
    (l2) edge [loop below] node [below] {$\top$} ()
    (l0) edge [loop above] node [above] {$(\neg g \land \neg p)$}   ();
\end{scope}
\end{tikzpicture}
 \end{minipage}
 }
 \noindent\resizebox{0.3\textwidth}{!}{
\begin{minipage} {0.33\textwidth}
\centering
\begin{tikzpicture}[shorten >=1pt]
\begin{scope}
\node (l0) [state,ellipse,initial above]   {$s_0,q_0$};
\node (l1) [state, ellipse] at (-2,-1.5)   {$s_2,q_0$};
\node (l2) [state, ellipse] at (-4.5,0)   {$s_3,q_0$};
\node (l3) [state,ellipse,accepting] at (-4.5,-3.5)   {$s_0,q_1$};
\node (l4) [state,ellipse,accepting] at (-2,-3.5)   {$s_2,q_1$};
\node (l5) [state, ellipse] at (0.1,-4.3)   {$s_1,q_0$};
\node (l6) [state,ellipse] at (2,-4.3)   {$*,q_2$};
\draw[rounded corners,->, thick] (l3) -- (-4.5,-4.3) -- node [below] {$b,(\lambda(0,b){-}r)$}  (l5);
\path [->, thick,above=.5cm,align=center]
    (l0) edge node [above, sloped] {$b,r$}   (l1)
    (l0) edge  node [above] {$a$}   (l2)
    (l0) edge node [above,sloped] {$b,(\lambda(0,b){-}r)$}   (l5)
    (l2) edge [bend right] node [left] {$c$} (l3)
    (l3) edge  node [right] {$a$} (l2)
    (l3) edge node [above, sloped] {$b,r$} (l1)
    (l1) edge [bend right] node [above] {$f$} (l3)
    (l1) edge  node [left] {$e$} (l4)
    (l4) edge node [above] {$f$} (l3)
    (l5) edge  node [above] {$d$} (l6)
    (l6) edge [loop above]  node [above] {$\top$} ()
    (l4) edge [loop right] node [right] {$e$} ()
    ;
\end{scope}
\end{tikzpicture}

  \end{minipage}
  }
  \caption{The mars surveillance example where a CTMDP (left) can be in four different states where states $s_0$ has the label $\{\neg \mathtt{p} , \neg \mathtt{g}\}$, state $s_2$ and $s_3$ have label $\{\neg \mathtt{p}, \mathtt{g} \}$, and state $s_1$ have the label $\{\mathtt{p} ,\neg \mathtt{g}\}$. 
The rates of each transition (if not $1$) is written in the figure. 
  A deterministic B{\"u}chi automata for the $\omega$-regular objective $\varphi = (\always \neg \mathtt{p}) \wedge (\always\eventually \mathtt{g})$ (center). Product CTMDP (right) where each zone has two components for denoting the CTMDP and the B{\"u}chi automaton parts. All the zones whose second component is $q_2$ is combined as one. 
  The exit rate of an action from a zone $(s,q_i)$ in the product CTMDP is same as the exit rate of the action from $s$ in the original CTMDP. 
}
 \label{fig:grid-world}
\end{figure*}
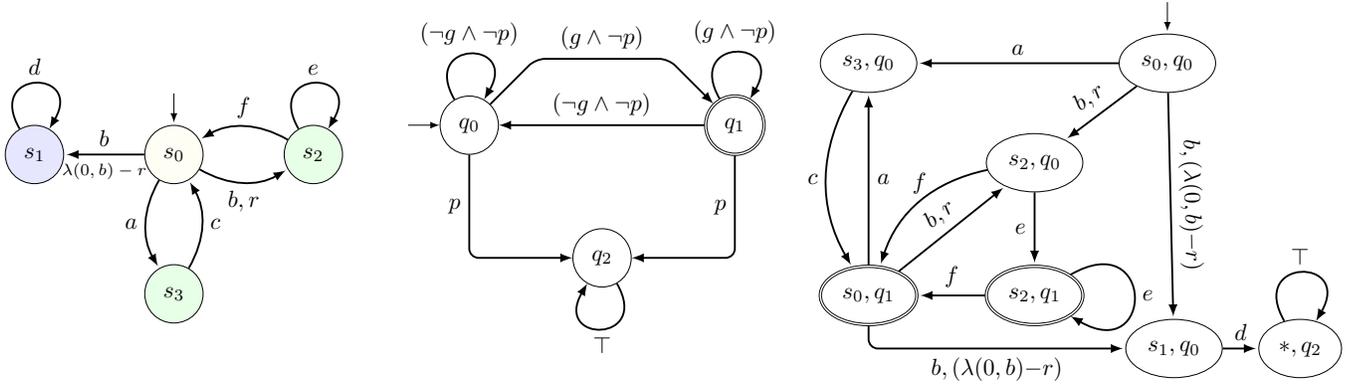

Given the unknown uncertainty of the terrain of Mars, the system is modeled as a CTMDP with associated uncertainty on the time of various actions where the exit rate of action $a$ from Zone (state) $0$ is denoted by $\lambda(0,a)$. In other words, when selected an action $a$ in a state $s$, the probability of spending $t$ time units in $s$ before taking $a$ is given by the cumulative distribution function $1 - e^{- \lambda(0,a)t}$.
Each transition have a rate associated with it and it determines the probability of taking that transition.
Assume that the action $b$ from Zone $0$ goes to Zone $2$ with rate $r$ (high probability) and to Zone $1$ with rate  $(\lambda(0,b) - r)$ (low probability).
The mission objective is to avoid Zone $1$ (purple zone) while infinitely often visiting the Zone $2$ or $3$ (the green zones). 
It can be captured in LTL~\cite{Baier08} as:
\[
\varphi = (\always \neg \mathtt{p}) \wedge (\always(\eventually \mathtt{g}))
\]
specifying that across the infinite horizon always (i.e. at every step expressed as temporal modality, $\always$) avoid the purple region ($\neg b$), and always eventually (i.e. at some time in the future expressed as temporal modality, $\eventually$) reach the green region, i.e. $(\always (\eventually g))$. 
The $GF \phi$ modality is often referred as \emph{infinitely often $\phi$}.
LTL combines these temporal operators using the standard propositional logic connectives such as: and ($\wedge$), or ($\lor$), not ($\neg$), and implication ($\to$).

This declarative specification can also be expressed using the B\"uchi automaton shown in Figure~\ref{fig:grid-world} (center) where the double circled states (here, $q_1$) denote accepting states.
The B\"uchi automaton can be used as a monitor to check the behavior of the learner over the environment.
For our example, it is visualized by taking the synchronous product (an extended space CTMDP) of the CTMDP with the automaton shown in Figure~\ref{fig:grid-world} (right).

For the satisfaction semantics on the product CTMDP, our goal is to maximize the probability that every infinite horizon behavior visits the accepting state infinitely often, while for the expectation semantics the goal is to maximize the expected time the system dwells in the accepting state.
\begin{itemize}
    \item \noindent {\bf Satisfaction Objective.} 
Consider the case where we have one Mars rover in this mission. Hence, our goal naturally is to maximize the probability of visiting green zones infinitely often while avoiding the purple zone (the satisfaction semantics).
In this case, the optimal schedule is to choose actions $a$ and $c$ indefinitely, i.e. the schedule $(a \to c)^\omega$, that satisfies the objective with probability $1$.
Note that it does not make sense to choose action $b$ no matter how low the probability is to reach the purple zone.
\item 
\noindent {\bf Expectation Objective.} Consider an alternative setting where we have a fleet of drones (we are okay in losing some drones as long as we maximize the mission objective) that needs to be sent to the surveillance of zone $2$ or $3$. 
Suppose that due to unforeseeable circumstances the mission may cease operation any time, and hence the goal is to maximize total expected time spent in the green zones ($2$ and $3$).
The schedule $(a\to c)^\omega$ is not optimal anymore as it may dwell a considerable amount in the Zone $0$. 
On the other hand any drone that chooses $b$ in Zone $0$ risks moving to Zone $1$ with a small probability.
As our goal is to maximize the expected time spent in the green zone over a large group of drones, the expectation semantics captures this intent and the optimal schedule is to start with action $b$.
\end{itemize}
\end{example}


\section{Preliminaries}
\label{prelims}
\label{sec:prelims}
We write $\mathbb{N}, \rat$ and $\ratpos$ for the sets of natural numbers, rational numbers, and non-negative rational numbers, respectively.
For a natural number $n \in \nat$, we denote by $[n]$ the set $\set{1, \ldots, n}$.
Given a finite set $A$, a (rational) \textit{probability distribution} over $A$ is a function
$\dist \colon A \rightarrow [0, 1] \cap \rat$ such that $\sum_{a\in A} \dist(a) = 1$. 
We denote the set of probability distributions on $A$ by $\Distributions(A)$. The \textit{support} of the
probability distribution $\dist$ on $A$ is $\supp(\dist) = \left\lbrace a {\in} A \;\vert\; \dist(a) {>} 0\right\rbrace$. A distribution is called \emph{Dirac} if $|\supp(\dist)| = 1$.


\paragraph{Continuous-Time MDPs.}
A (discrete-time) \emph{Markov decision process (MDP)} is a tuple of the form $\Mdp = (\states, \initstate, \actions, \trans)$, where $\states$ is a finite set of \emph{states}, $\initstate {\in} \states$ is the \emph{initial} state, $\actions$ is a finite set of \emph{actions}, and $\trans: \states {\times} \actions \to \Distributions(\states)$ is a \emph{transition function}. 
Let $\actions{}(s)$ be the set of actions enabled in the state $s {\in} S$.
An MDP is called a Markov chain if for every $s {\in} S$, the set $\actions{}(s)$ is singleton.
A \emph{continuous-time} MDP (CTMDP) is a tuple of the form $\Mdp = (\states, \initstate, \actions, \transR)$, where $\transR: \states {\times} \actions {\times} \states \to \Reals_{\geq 0}$ is a \emph{transition rate function}, while the rest of the parameters are as in the case of an MDP. 
For $s {\in} \states$ and $a {\in} \actions{}(s)$, we define $\lambda(s, a) = \sum_{s'} \transR(s,a,s') > 0$ to be the \emph{exit rate} of $a$ in $s$.
We define a \emph{probability matrix}, $P_{\mathcal{M}}$, where 
\[
    \transP_{\mathcal{M}}(s, a, s') = \begin{cases} 
    \frac{\transR(s, a, s')}{\lambda(s, a)} & \text{if } \lambda(s, a)>0 \\
    0 & \text{otherwise}
    \end{cases}
\]
 When $\mathcal{M}$ is clear from the context, we simply denote $P_{\mathcal{M}}$ by $P$.
The residence time for action $a$ in $s$ is exponentially distributed with mean $\lambda(s, a)$. For a given state $s$ and an action $a$, the  probability of spending $t$ time units in $s$ before taking the action is given by the cumulative distribution function $F(t|s,a) = 1 - e^{- \lambda(s,a)t}$
of the exponential distribution. The probability of a transition from state $s$ to $s'$ on an action $a$ in $t$ time units is given by 
$p_{a}(s,s',t) = P(s,a,s')\cdot F(t|s,a)$.
A CTMDP is called a continuous-time Markov chain (CTMC) if for every state $s \in S$, the set $\actions{}(s)$ is singleton.

\paragraph{Uniformization.}
A \emph{uniform} CTMDP has a constant exit rate $C$ for all state-action pairs i.e, $\lambda(s,a) = C$ for all states $s \in \states$ and actions $a \in \actions{}(s)$. The procedure of converting a non-uniform CTMDP into a uniform one is known as \emph{uniformization}. Consider a non-uniform CTMDP $\Mdp$. Let $C {\in} \realpos$ be such that $C {\geq} \lambda(s,a)$ for all $(s, a) \in \states {\times} \actions$. We obtain a uniform CTMDP $\unif$ by changing the rates to $\transR'$:
    $$
    \transR'(s,a,s') = \begin{cases} 
    \transR(s,a,s') & \text{if } s \neq s' \\
    \transR(s,a,s')+C - \lambda(s,a) & \text{if } s=s' 
    \end{cases}
    $$
For every action $a \in \actions{}(s)$ from each state $s$ in the new CTMDP we have a self loop  if $\lambda(s,a) < C$. A uniformized CTMDP has a constant transition rate $C$ for all actions and because of this, the mean interval time between any two successive actions is constant.
See Appendix~\ref{uniform} for an example.

\paragraph{Schedules.}
An infinite run of the CTMDP is an $\omega$-word
$
(s_1, (t_1, a_1), s_2, (t_2, a_2), \ldots) \in \states \times ((\realpos \times \actions) \times \states)^\omega
$
where $s_i \in \states$, $a_i \in \actions{}(s_{i})$ and $t_i$ is the time spent on state $s_i$. A finite run is of the form $(s_1, t_1, a_1, \ldots, t_{n-1}, a_{n-1}, s_n)$ for some $n \in \mathbb{N}$. The set of infinite and the set of finite runs in $\Mdp$ are denoted by $\irun$ and $\frun$ respectively. Similarly, the set of infinite runs and the set of finite runs starting from a state $s$ in $\Mdp$ are denoted by $\irun(s)$ and $\frun(s)$ respectively. For $r \in \frun$, we denote by $last(r)$ the last state in the run $r$.

We use a \emph{schedule} to resolve non-determinism in a CTMDP. A schedule is a function $\sigma : \frun \rightarrow \Distributions(\actions)$, where $\Distributions(\actions)$ is a probability distribution on the set of enabled actions. Given a finite run $r \in \frun$, a schedule gives a probability distribution over all actions enabled in $last(r)$.
 A schedule is \emph{deterministic} if $\Distributions(\actions)$ is Dirac, i.e, a single action is chosen in the distribution, otherwise it is \emph{randomized}. 
 Further, a schedule $\sigma$ is stationary if for all $r$,$r' \in \frun$ with $last(r) = last(r')$, we have that $\sigma(r) = \sigma(r')$. A \emph{pure} schedule is a deterministic stationary schedule. Let $\Sigma_{\Mdp}$ be the set of all schedules.
 
 A CTMDP $\Mdp$ under a schedule $\sigma$ acts as a continuous time Markov chain (CTMC) which is denoted by $\ctmc$.
 The set of infinite and the set of finite runs in $\ctmc$ are denoted by $\irun_\sigma$ and $\frun_\sigma$ respectively.
 The behavior of a CTMDP $\Mdp$ under a schedule $\sigma$ and starting state $s \in S$ is defined on a probability space
$(\irun_{\sigma}(s), \frun_{\sigma}(s), {\Pr}^{\Mdp}_{\sigma}(s))$ over
the set of infinite runs of $\sigma$ with starting state $s$.  Given a random variable 
$f : \irun_{\sigma} \to \Reals$, we denote by $\mathbb{E}^{\Mdp}_{\sigma}(s) \{f\}$ the
expectation of $f$ over the runs of $\ctmc$.
For $n \geq 1$, we write $X_n$, $Y_n$, $D_n$, and $T_n$ for the random variables corresponding to the $n$-th state, action, time-delay in the $n$-th state, and time-stamp (time spent up to the $n$-th state). We let $D_0 = T_0 = 0$.
 


\paragraph{Rewardful CTMDPs.}
A rewardful CTMDP ($\Mdp, rew$) is a CTMDP and a reward function $rew : \states \cup (\states \times \actions) \rightarrow \realpos$ which assigns a  \emph{reward-rate} to each state and a scalar reward to each state-action pair. Thus spending $t$ time-units in $s\in\states$ gives $rew(s) \cdot t$ of (state-delay) reward and choosing $a$ from $s$ gives $rew(s, a)$ (action) reward.

Continuous time discounting is done with respect to a discount parameter $\alpha {>} 0$ where one unit of reward obtained at time $t$ in the future gets a value of $e^{-\alpha t}$. Formally, the expected discounted reward for an arbitrary schedule $\sigma$ from a state $s$ is given by:
\begin{multline*}
\discobjective^{\Mdp[\sigma]} (\alpha)(s) =     \mathbb{E}_{\sigma}^{\Mdp}(s) \bigg[ \sum_{n=1}^{\infty} e^{-\alpha T_{n-1}}\Big( rew(X_n, Y_n) + \\ \int_{T_{n-1}}^{T_{n}}e^{-\alpha(t - T_{n-1})} rew(X_n)  dt\Big) \bigg].
 \end{multline*}
 Here, we multiply the expected reward obtained at the $n-$th state with $e^{- \alpha T_{n-1}}$ as per the continuous time discounting. The initial term in the parenthesis corresponds to the reward obtained from state $X_n$ by picking action $Y_n$ (action reward) and the second term corresponds to the state-delay reward i.e reward obtained with respect to the reward-rate $rew(X_n)$ which is discounted over the time $(t - T_{n-1})$. 

The expected average reward from $s$ under $\sigma$ is given by: 
 \begin{multline*}
 \avgobjective^{\Mdp[\sigma]}(s) = \liminf_{N\rightarrow \infty}  \mathbb{E}_{\sigma}^{\Mdp}(s) \bigg[ 
 \frac{1}{T_{N}} \cdot \Big(\sum_{n=1}^{N} rew(X_n, Y_n) +\\ \int_{T_{n-1}}^{T_{n}}
 rew(X_n) dt \Big) \bigg],
 \end{multline*}
where the first and second term corresponds to the action and state-delay reward respectively. 
Recall that $T_{N}$ is the total time spent upto the $n$-th state.
Consider an objective $\objective \in 
 \{\discobjective, \avgobjective \}$. 
 The expected reward obtained by schedule $\sigma$ on $s \in \states$ is denoted by $\objective^{\Mdp[\sigma]}\!(s)$. A schedule $\sigma^{*}$ is optimal for $\objective$ if 
    $\objective^{\Mdp[\sigma^*]}(s) = sup_{\sigma \in \Sigma_{\Mdp} }\objective^{\Mdp[\sigma]}\!(s)$ for all $s \in \states$.


For a given CTMDP $\Mdp$, one can compute the optimal schedule for the discounted-sum objective or the expected average by using policy iteration, value iteration or linear programming \cite{feinberg-MDP, puterman2014markov} on the uniformized CTMDP $\unif$. 
When the CTMDP is unknown (unknown rates and states), an optimal schedule can be computed via reinforcement learning.

\paragraph{Reinforcement Learning (RL).}
 RL allows us to obtain an optimal schedule by repeatedly interacting with the environment and thereby observing a reward. A \emph{training episode} is a finite sequence of states, actions and rewards which terminates on certain specified conditions like when the number of samples drawn is greater than some threshold. The RL obtains information about rates and rewards of the CTMDP model by running several training episodes. Broadly, there are two categories of RL, model-based and model-free. We focus on space efficient model-free RL algorithms as they compute optimal schedule without constructing the state transition system  \cite{strehl2006pac}.
 
 One of the most successful model-free learning algorithm for DTMDPs is the Q-learning algorithm~\cite{wd92}. It aims at learning (near) optimal schedules in a (partially unknown) MDP for the discounted sum objective.  Bradtke and Duff~\cite{BD94} introduced the Q-learning algorithm for CTMDPs. We give here a brief description of Q-learning algorithm for CTMDPs.

For a given discount parameter $\alpha {>} 0$, the one-step expected discounted reward for an action $a$ from state $s$ is given by 
$\rho(s,a) = rew(s,a) {+} \frac{rew(s)}{\alpha+\lambda(s,a)}$~\cite[Eq. 11. 5. 3]{puterman2014markov}.
The Q-function for a state $s$ and an action $a$ under schedule $\sigma$, denoted $\valueq(s,a)$, is defined as
\[
 \rho(s,a) + \frac{\lambda(s,a)}{\lambda(s,a)+\alpha} \sum_{s' \in \states} \transP(s,a,s') \cdot  \valueq(s',\sigma(s')).
\]


It gives the total expected discounted reward obtained by taking action $a$ from $s$, and following $\sigma$ afterwards. 
The optimal Q-function, denoted $\mathcal{Q}^*$ is given by,
 \[
 \rho(s,a) + \frac{\lambda(s,a)}{\lambda(s,a)+\alpha} \sum_{s'\in\states}\transP(s,a,s') \cdot \max_{a'\in\actions} \mathcal{Q}^{*}(s',\alpha').
 \]
 Q-learning uses stochastic approximation~\cite{Sutton18} to estimate the $\mathcal{Q}^*$ function. When a transition from state $s$ to $s'$ on an action $a$ with delay $\tau$ is observed, the $\Qf$ estimates are updated as~\cite[Eq 12]{BD94}: 
%
 \begin{multline*}
        \newvaluef(s,a) := (1-\beta_k)\kvaluef(s,a) + \\ \beta_{k} \Big( r(s,a,s') +
        e^{-\alpha\tau} \max_{a'} \kvaluef(s',a') \Big),
\end{multline*}
where $r(s,a,s')$ is the sampled reward from state $s$ to $s'$, the sampled transition time is $\tau$, and $\beta_{k}$ is the learning rate. 
 The RL algorithm samples through states and updates the Q-function iteratively.
 While sampling, the agent picks the action based on an RL schedule.
The optimal schedule is generated after completion of some number of episodes by taking the action that gives the highest Q-value from each state.

We focus on how to \emph{automatically obtain reward mechanisms for $\omega$-regular objectives for CTMDPs} so that off-the-shelf RL algorithms can learn an optimal schedule.

\section{Problem Statement}
\label{sec:p_statement}
\paragraph{Omega-regular Objectives.}
An $\omega$-regular objective is defined by a nondeterministic B\"uchi automaton $\oautomata = (\alphabets, \ostate, \oinitstate, \otrans, F)$ where $\alphabets$ is a finite \emph{alphabet}, $\ostate$ is a finite set of \emph{states}, $\oinitstate \in \ostate$ is an \emph{initial state}, $\otrans : \ostate \times \alphabets \rightarrow 2^\ostate$ is a \emph{transition function} and $F \subseteq \ostate$ is the set of \emph{accepting states}.
A B\"uchi automaton is deterministic, if $\delta(q, a)$ is singleton for all $(q, a) \in \ostate{\times}\alphabets$.
We define the extended transition function $\hat{\otrans}: \ostate \times \alphabets^* \rightarrow 2^\ostate$, derived from $\otrans$, as 
 $\hat{\otrans}(q, \varepsilon) = \set{q}$ and $\hat{\otrans}(q, ax) = \cup_{q' \in \otrans(q, a)} \hat{\otrans}(q', x)$, for $q \in \ostate$ and $ax \in \Sigma\Sigma^*$.


A \emph{run} $r$ of $\oautomata$ is an infinite sequence $(r_0, w_0, r_1, w_1, \ldots)$ where $r_0 = q_0$, $r_i \in \ostate$, $w_i \in \alphabets$ and $r_{i+1} \in \otrans(r_i,w_i)$ for all $i \in \mathbb{N}$. The word of a run $r = (r_0, w_0, r_1, w_1, \ldots)$ is ${\sf L}(r)=(w_0 w_1 \cdots)$ . Let the set of runs of $\oautomata$ be $\oruns$.
We say that a run $r \in \oruns$ is accepting if there exists a $q_f \in F$ such that $q_f$ occurs infinitely often in $r$. An $\omega$-word $w = (w_0 w_1 \cdots)$ is accepted by $\oautomata$ if there exists an accepting run $r_w = (r_0, w_0, r_1, w_1, \ldots)$ of $\oautomata$. 
The language of the automaton $\oautomata$, denoted $\Ll(\oautomata)$ is the set of all words that is accepted by the automaton. 

\paragraph{CTMDPs and Omega-regular Objectives.}
In order to express the properties of a CTMDP $\Mdp$ using a B\"uchi automaton, we introduce the notion of a labelled CTMDP.
A \emph{labelled} CTMDP is a triple $(\Mdp, \atomicprop, \labelling)$ where $\Mdp$ is a CTMDP, $\atomicprop$ is a set of atomic propositions, and $\labelling: \states \rightarrow 2^\atomicprop$ is a labelling function. 
Let $\oautomata = (2^{\atomicprop}, \ostate, \oinitstate, \otrans, \acceptingc)$ be a B\"uchi automaton expressing the learning objectives of $\Mdp$.

Recall that for a CTMDP $\Mdp$ under a schedule $\sigma$ we write $X_n$, $Y_n$, $D_n$, and $T_n$ for the random variables corresponding to the $n$-th state, action, time-delay at the $n$-th state, and time-stamp (total time spent up to the $n$-th state).
We introduce the random variable $F_n$ to indicate if the sequence of observations of the CTMDP leads to an accepting state on $\oautomata$ in $n$-steps, i.e., 
$
F_n = [\hat{\delta}(L(X_0)\cdot L(X_1) \cdots L(X_n)) \cap F].
$

For a CTMDP $(\Mdp, \atomicprop, \labelling)$ and  automaton $\oautomata = (2^{\atomicprop}, \ostate, \oinitstate, \otrans, \acceptingc)$,
we study the following problems:
\begin{enumerate}
    \item {\bf Satisfaction Semantics.} 
    Compute a schedule of $\Mdp$ that maximizes the probability of visiting accepting states $F$ of $\oautomata$ infinitely often.
    We define the satisfaction probability of a schedule $\sigma$ from starting state $s$ as: 
\begin{equation*}
\PSemSat^{\Mdp}_{\oautomata}(s, \sigma) 
{=}   {\Pr}^{\Mdp}_{\sigma}(s) \set{\forall_i  \exists_{j{\geq} i} F_j }.
\end{equation*}
Intuitively, it describes the probability of runs from state $s$ under $\sigma$ in the CTMDP such that the corresponding run in $\oautomata$ visits the accepting states infinitely often.
The optimal satisfaction probability
$\PSemSat^{\Mdp}_{\oautomata}(s)$ for  $\oautomata$ 
is defined as $\sup_{\sigma \in \Sigma_{\Mdp}} \PSemSat^{\Mdp}_{\oautomata}(s, \sigma)$, and we say
that a schedule $\sigma \in \Sigma_\Mdp$ is an optimal schedule for $\oautomata$ if
$\PSemSat^{\Mdp}_{\oautomata}(s, \sigma)  = \PSemSat^{\Mdp}_{\oautomata}(s)$ for all $s \in \states$.

\item {\bf Expectation Semantics.} Compute a schedule of $\Mdp$ that maximizes the long-run expected average time spent in the accepting states of $\oautomata$.
We define the expected satisfaction time of a schedule $\sigma$ from starting state $s$ as: 
\begin{equation*}
\ESemSat{}^{\Mdp}_{\oautomata}(s, \sigma) 
=   \mathbb{E}^{\Mdp}_{\sigma}(s) \set{ \liminf_{n \rightarrow \infty} \frac{\sum_{i=1}^{n} F_i \cdot D_i}{T_n}}.
\end{equation*}
The optimal expected satisfaction time
$\ESemSat^{\Mdp}_{\oautomata}(s)$ for specification $\oautomata$ is defined as $\sup_{\sigma \in \Sigma_{\Mdp}} \ESemSat^{\Mdp}_{\oautomata}(s, \sigma)$, and we say that $\sigma \in \Sigma_\Mdp$ is an optimal expectation maximisation schedule for $\oautomata$ if $\ESemSat^{\Mdp}_{\oautomata}(s, \sigma) = \ESemSat^{\Mdp}_{\oautomata} (s)$.
\end{enumerate}

\paragraph{Product Construction.} 
Given a \emph{labelled} CTMDP $(\Mdp, \atomicprop, \labelling)$ where $\atomicprop$ is a set of atomic propositions, and $\labelling: \states \rightarrow 2^\atomicprop$ is a labelling function, and a   B{\"u}chi automaton $\oautomata = (2^{\atomicprop}, \ostate, \oinitstate, \otrans, \acceptingc)$, the product CTMDP is defined as $\Mdp \times \oautomata = ((\states \times \ostate),(\initstate,\oinitstate),\actions,\transR^{\times},\acceptingc^{\times})$ where the rates are $\transR^{\times} : (\states \times \ostate) \times \actions \times (\states \times \ostate) \rightarrow \Reals_{\geq 0}$ such that $\transR^{\times} ((s,q),a,(s',q')) = \transR(s,a,s')$ if $\transR(s,a,s')>0$ and $\otrans (q,\labelling(s)) = \{q'\}$.
If $F$ is the set of accepting 
states
in $\oautomata$, then the accepting condition is a set 
$F^\times$ of states where $(s,q) \in F^\times$ iff $q \in F$. 
 An example of a product CTMDP is given in Appendix~\ref{sat vs expt}.

\paragraph{Good-for-CTMDP Automata.} 
From the definition of both the semantics, it is clear that the optimal schedule requires some memory to monitor the run in the B\"uchi automaton (see Example~\ref{ex:mem} in Appendix~\ref{exmp:memory}).
For the right kind of B\"uchi automata~\cite{HPSS20}, the amount of memory required can be equal to the size of the automata. 
A key construction to compute these schedules is the product construction, where the CTMDP and the automaton are combined together as a CTMDP with accepting states governed by the accepting states of the B\"uchi automata.
On the other hand, not every B\"uchi automaton can be used for this construction.
The class of B\"uchi automata where the semantic value of satisfaction of the property on the MDP equals to the corresponding problems on the product structure, are called good-for-MDP (GFM) automata~\cite{HPSS20}.

We introduce the notion of good-for-CTMDP automata in Appendix~\ref{app:gfm}.
If a B\"uchi automaton is GFM, then one can show via uniformization that it is also good-for-CTMDPs.
There exist several syntactic characterizations of good-for-MDP automata including suitable limit-deterministic B\"uchi automata (SLDBA)~\cite{sickert2016limit} and slim automata~\cite{HPSS20}.
Moreover, every LTL specification can be effectively converted into a GFM B\"uchi automata.
Moreover, there exist tools (OWL and Spot) to convert LTL objectives to good-for-CTMDP automaton.
Hence, in this paper, w.l.o.g., we assume that $\omega$-regular objectives are given as good-for-CTMDP automata.

\paragraph{Problem Definition.} Given a CTMDP $\Mdp$ with unknown transition structure and rates, and an $\omega$-regular objective $\phi$ given as a good-for-CTMDP B\"uchi automata $\oautomata$, we are interested in the following reward translation problem for the satisfaction semantics and for the expectation semantics. 

\begin{problem}[Reward Translation Scheme]
    Design a reward scheme for $\oautomata$ 
    such that any off-the-shelf RL algorithm optimizing the discounted reward in CTMDPs converges to an optimal schedule for satisfaction (expectation) semantics.
\end{problem}

In Section~\ref{sec:theorems&algo} we provide a solution for the satisfaction semantics, while in Section~\ref{sec:d_time} we sketch a solution for this problem for the expectation semantics.
We reduce these problems to average reward maximization for CTMDPs.
Since average-reward RL algorithms for CTMDPs and MDPs require strong assumptions on the structure (such as communicating MDPs)~\cite{Sutton18}, we solve the average-reward RL problem by reducing it to a discounted-reward problem using the following result.
\begin{theorem}\label{corollary:1}
For every CTMDP $\Mdp$, there exists a pure schedule $\bschedule$ and a threshold $0 {\leq} \ctmdpthrate^{\Mdp} {<} 1$ such that
    for every discount-rate function $\ctmdprate$, where $\ctmdprate(s,a) \geq \ctmdpthrate^{\Mdp}$ for every valid state-action pair $(s,a)$, the schedule $\bschedule$ is an optimal schedule maximising the expected discounted reward.
    Moreover, $\bschedule$ also maximizes the expected average reward.
\end{theorem}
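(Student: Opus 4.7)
The plan is to reduce the CTMDP problem to a finite discrete-time MDP problem via uniformization, and then invoke the classical Blackwell optimality theorem \cite{puterman2014markov}. First I would apply uniformization to $\Mdp$ to obtain a uniform CTMDP $\unif$ with constant exit rate $C \geq \max_{(s,a)} \lambda(s,a)$. As recalled in the preliminaries, uniformization does not change the CTMC induced by any schedule $\sigma$; in particular it preserves both the expected discounted reward $\discobjective^{\Mdp[\sigma]}(\alpha)(s)$ and the expected average reward $\avgobjective^{\Mdp[\sigma]}(s)$, and hence also the sets of discounted-optimal and average-optimal schedules.

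Next, since in $\unif$ the inter-transition time is i.i.d.\ exponential with rate $C$, integrating $e^{-\alpha t}$ against this density collapses the continuous-discounted reward into a discrete-time $\beta$-discounted reward on the embedded DTMDP $\embeddedMdp$ of $\unif$, with a \emph{state-action-independent} per-transition discount factor $\beta = C/(C+\alpha)$. As $\alpha$ sweeps through $(0,\infty)$, $\beta$ sweeps through $(0,1)$. Applying Blackwell's theorem to the finite MDP $\embeddedMdp$ yields a pure stationary schedule $\bschedule$ and a threshold $\beta_0 \in [0,1)$ such that $\bschedule$ is optimal for every discrete-time discount $\beta \in [\beta_0,1)$.

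Translating back through $\beta = C/(C+\alpha)$, I would set $\ctmdpthrate^{\Mdp}$ so that the condition $\ctmdprate(s,a) \geq \ctmdpthrate^{\Mdp}$ on the effective per-transition discount in the original CTMDP implies $\beta \geq \beta_0$ in $\unif$; then $\bschedule$ is optimal for the discounted objective on $\unif$ and, by the first step, on $\Mdp$. For the average-reward claim I would use the standard fact that Blackwell optimal schedules in a finite MDP are gain optimal, together with the observation that in $\unif$ the mean time per transition is the constant $1/C$, so maximizing per-transition average reward on $\embeddedMdp$ is equivalent to maximizing per-unit-time average reward on $\unif$, which (again by uniformization) coincides with the average reward on $\Mdp$.

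The main obstacle will be reconciling the \emph{state-action dependent} threshold condition in the statement of the theorem with the \emph{state-action independent} per-transition discount factor $\beta$ produced by uniformization: one must argue that the pointwise lower bound $\ctmdprate(s,a) \geq \ctmdpthrate^{\Mdp}$ in $\Mdp$ can be chosen tight enough to force $\beta \geq \beta_0$ uniformly in $\unif$. A secondary subtlety is handling both pieces of the reward---the action reward $rew(s,a)$ and the state-delay reward $rew(s)$---during the collapse to discrete time, but this is routine once one uses the standard $\rho(s,a) = rew(s,a) + rew(s)/(\alpha+\lambda(s,a))$ rewriting already invoked earlier in the preliminaries.
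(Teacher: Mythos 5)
Your proposal follows essentially the same route as the paper's proof: uniformize $\Mdp$, identify the uniform CTMDP's discounted value equations with those of a discrete-time MDP with constant discount $C/(C+\alpha)$, invoke the classical Blackwell optimality theorem of \cite{puterman2014markov} there, and transfer both discounted and average optimality back to $\Mdp$ (using the adjusted one-step reward, the value identity for pure schedules under uniformization, and the constant-$C$ rescaling of the average reward). The "main obstacle" you flag is resolved in the paper exactly as you anticipate, by taking the threshold $\ctmdpthrate^{\Mdp}$ to be the maximum of $\ctmdpthrate(s,a)$ over the finitely many state-action pairs.
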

This schedule $\bschedule$ is known as a Blackwell optimal schedule.
We provide a novel uniformization based proof for this theorem in Appendix~\ref{sec:blackwell_optimality}.
We show that we need different reward translation schemes for the two semantics.


\section{RL for Satisfaction Semantics}
\label{sec:theorems&algo}


We reduce the problem of satisfaction semantics of an $\omega$-regular objective in a CTMDP to an expected average reward objective.
Using Blackwell optimality result stated in Theorem~\ref{corollary:1}, we further reduce this to an expected discounted reward objective which allows us to use off-the-shelf RL for CTMDP for learning schedules for $\omega$-regular objectives.

To find a schedule satisfying an $\omega$-regular objective in a CTMDP, we need to identify the accepting end-components where an accepting end-component~\cite{alma991027942769706011} is a sub-MDP that is closed under probabilistic transitions and contains an accepting state.
It is well known~\cite{alma991027942769706011} that as an end-component $C$ of an MDP is entered, there is a schedule that visits every state-action pair in $C$ with probability 1 and stays in $C$ forever.
Hence, a schedule that maximizes the probability of satisfaction of a given $\omega$-regular objective maximizes the probability of reaching the accepting end-components.
In Appendix~\ref{app:EC}, we show an example of a CTMDP and its end-components.
Also the MDP in the top part of Figure~\ref{fig:p3}, is itself an accepting end-component since the state $q_0$ is accepting.

We further reduce the problem to an average reward problem as described below and then specify a reward function such that the schedule maximising the expected average reward maximizes the probability of satisfying the objective.
\paragraph{Reduction to Average Reward.}
Before describing our RL algorithm for unknown CTMDP, we first describe the reduction when an input CTMDP is fully known to explain the intuition behind our algorithm.
Consider a CTMDP $\Mdp$, a \textbf{GFM} $\oautomata$, and let $\pmdp$ denote the product CTMDP.
For our reduction, we define a constant $\zeta \in (0,1)$ and an augmented product CTMDP, denoted by $\augmdp$. 
The CTMDP $\augmdp$ is constructed from $\pmdp$ by adding a new sink state $t$ with a self loop labelled by an action $a'$ and with rate $\lambda(t,a') > 0$, and making it the only accepting state in $\augmdp$. 
Further, in $\augmdp$, the rates of each outgoing transition from an accepting state in $\pmdp$ is multiplied by $\zeta$. 
Also, for each action $a$ from an accepting state $s$ in $\pmdp$, in $\augmdp$ we add a new transition to the sink state $t$ with rate $\lambda(s,a) \cdot (1-\zeta)$ where $\lambda(s,a)$ is the exit rate of the state-action pair $(s,a)$ in $\pmdp$. 
Figure~\ref{fig:p3} shows an example of this construction.
Note that in the figure, $q_0$ is the only accepting state in the product CTMDP.
There are two outgoing transitions from $q_0$ on action $a$ to $q_1$ and $q_2$ with rates $r_1$ and $r_2$ respectively, and hence $\lambda(q_0, a) = r_1 + r_2$.
We then add a transition from $q_0$ to $t$ with rate $(r_1+r_2) \cdot (1-\zeta)$.

\begin{figure}[t]
 \centering
  \begin{minipage}{0.3\textwidth}
     \begin{tikzpicture}[shorten >=1pt, node distance=2.3 cm, on grid, auto,thick,initial text=]
\begin{scope}[every node/.style={scale=1}]
\node (l0) [state,accepting, fill = safecellcolor] {$q_0$};
\node (l1) [state,right = of l0, fill = safecellcolor]   {$q_1$};
\node (l2) [state,left = of l0,fill = safecellcolor]   {$q_2$};
\end{scope}
 \begin{scope}
\path [-stealth, thick]
    (l0) edge [bend left]  node [above] {$a,r_1$}   (l1)
    (l0) edge [bend right]   node [above] {$a,r_2$}   (l2)
    (l1) edge [bend left]  node [below] {$b,r_3$}   (l0)
    (l2) edge  [loop above] node [above] {$c,r_4$}   ()
    (l2) edge [bend right] node [below] {$d, r_4$} (l0)
    ;
\end{scope}
\end{tikzpicture}
  \end{minipage}
  \begin{minipage}{0.3\textwidth}
    \centering
     \begin{tikzpicture}[shorten >=1pt, node distance=2.3 cm, on grid, auto,thick,initial text=,]
\begin{scope}[every node/.style={scale=1}]
\node (l0) [state,fill = safecellcolor] {$q_0$};
\node (l1) [state,right = of l0, fill = safecellcolor]   {$q_1$};
\node (l2) [state,left = of l0,fill = safecellcolor]   {$q_2$};
\node (l3) [state, above = of l0,accepting, fill = goodcellcolor] {$t$};
\end{scope}
 \begin{scope}
\path [->]
    (l0) edge [bend left]  node [above] {$a,r_1  \cdot \zeta$}   (l1)
    (l0) edge  [bend right] node [above] {$a,r_2 \cdot \zeta$}   (l2)
    (l1) edge [bend left]  node [below] {$b,r_3$}   (l0)
    (l2) edge  [loop above] node [above] {$c,r_4$}   ()
    (l0) edge node [right] {$a,r_1+r_2 \cdot(1-\zeta)$}(l3)
    (l3) edge [loop left] node [left] {$a', \lambda(t,a')$} () 
    (l2) edge [bend right] node [below] {$d, r_4$} (l0)
    ;
\end{scope}
\end{tikzpicture}
  \end{minipage}
  \caption{A product CTMDP $(\Mdp \times \oautomata)$ (top) and its corresponding augmented product CTMDP $\Mdp^{\zeta}$ (bottom).}
  \label{fig:p3}
\end{figure}

With a slight abuse of notation, if $\sigma$ is a schedule in the augmented CTMDP $\Mdp^\zeta$, then we also denote by $\sigma$ a schedule in $\Mdp \times \oautomata$ obtained by removing $t$ from the domain of $\sigma$.
Thus fix a schedule $\sigma$ in both $\Mdp^\zeta$ and in $\Mdp \times \oautomata$.
Note that for every state in an accepting end-component, the probability of reaching the sink $t$ in $\Mdp^\zeta$ is $1$.
Similarly, for every state in a rejecting end-component, the probability of reaching $t$ in $\Mdp^\zeta$ is $0$.
The probability of reaching $t$ in $\Mdp^\zeta$ under $\sigma$ overapproximates the probability of reaching the accepting end-components in $\Mdp \times \oautomata$ under $\sigma$.
The difference in the two probabilities occurs since in $\Mdp^\zeta$, from the transient accepting states, with probability $1-\zeta$, one can reach the sink $t$.
This approximation error tends to $0$ as $\zeta$ tends to $1$.
We define a reward function in $\augmdp$ such that a schedule maximising the expected average reward in $\augmdp$ maximizes the probability of satisfying the $\omega$ regular objective in $\pmdp$.

\paragraph{Reward Function.} 
The reward function 
provides a reward of $1$ per time unit for staying in the accepting sink $t$, while the reward is $0$ otherwise, i.e.
\[
rew(s) = \begin{cases}
1 & \text{if $s=t$}  \\
0 & \text{otherwise}
\end{cases}
\]
As there is only a single action $a'$ from state $t$ in $\augmdp$ which is a self loop, we can conclude that any schedule that maximizes the probability of reaching $t$ also maximizes the expected average reward in $\mathcal{M}^{\zeta}$.
Further, following the discussion above, for high values of $\zeta$, the schedule also maximizes the probability of satisfying the $\omega$-regular objective in $\Mdp \times \oautomata$.
We thus have the following.

\begin{theorem}\label{theorem:4.1}
There exists a threshold $\zeta' \in (0,1)$ such that for all $\zeta > \zeta'$, and for every state $s$, a schedule maximising the expected average reward in $t$ in $\augmdp$ is 
\begin{inparaenum}[(1)]
\item an optimal schedule in the product CTMDP $\mathcal{M} \times \mathcal{A}$ from $s$ for satisfying the $\omega$-regular objective $\phi$.
Further, since $\oautomata$ is a GFM, we have that \item $\sigma$ induces an optimal schedule for the CTMDP $\mathcal{M}$ from $s$ with objective $\phi$.
\end{inparaenum}
\end{theorem}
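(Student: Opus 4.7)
My plan is to prove this in three logical stages: (i) show that maximizing the expected average reward in $\augmdp$ coincides with maximizing the probability of reaching the sink $t$; (ii) show that for $\zeta$ close enough to $1$, maximizing the probability of reaching $t$ in $\augmdp$ coincides with maximizing the probability of satisfying the B\"uchi objective in $\pmdp$; (iii) lift the optimal product-CTMDP schedule back to $\Mdp$ using the GFM property of $\oautomata$.

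For stage (i), the reward function assigns reward rate $1$ only to the sink $t$, and $t$ has a single self-looping action $a'$, so $t$ is absorbing under every schedule. Hence for any schedule $\sigma$ with reach-probability $p_{t}(s,\sigma) = \Pr^{\augmdp}_{\sigma}(s)\{\lozenge t\}$, an easy computation (using $T_N \to \infty$ a.s.\ and the fact that accumulated reward up to hitting $t$ is bounded) gives
\begin{equation*}
\avgobjective^{(\augmdp)^{[\sigma]}}(s) \;=\; p_{t}(s,\sigma).
\end{equation*}
So a schedule is average-reward optimal in $\augmdp$ iff it maximizes the probability of absorption in $t$.

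For stage (ii), fix any schedule $\sigma$ in $\pmdp$ (viewed canonically as a schedule in $\augmdp$). Split the runs according to whether they eventually settle in an accepting or a rejecting end-component of $\pmdp$; by standard end-component results, these two events partition the runs up to a null set. In an accepting end-component $C$ of $\pmdp$, every accepting state of $C$ is visited infinitely often almost surely, and each such visit triggers, independently in $\augmdp$, a probability $(1-\zeta)$ of being diverted to $t$; a Borel--Cantelli argument then forces $p_{t}=1$ on this event. In a rejecting end-component, no accepting states occur, so $p_{t}=0$. The only discrepancy between $p_{t}(s,\sigma)$ and the true satisfaction probability $\PSat^{\pmdp}(s,\sigma) := \Pr^{\pmdp}_{\sigma}(s)\{\text{reach an accepting EC}\}$ therefore arises from the finite transient prefix before the run enters its final end-component, where passing through a non-recurrent accepting state can inflate $p_{t}$ by at most the probability of reaching $t$ during that transient portion. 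This extra contribution is bounded by $1-\zeta$ times the expected number of transient visits to accepting states, which is uniformly bounded over all pure schedules because $\pmdp$ is finite.

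For stage (iii), I will exploit that there are only finitely many pure schedules in the finite product CTMDP $\pmdp$. By Theorem~\ref{corollary:1} (Blackwell optimality), it suffices to consider pure schedules. For each pure $\sigma$, both $p_{t}(s,\sigma)$ and $\PSat^{\pmdp}(s,\sigma)$ are continuous (in fact, rational) functions of $\zeta$, and the argument above shows $|p_{t}(s,\sigma)-\PSat^{\pmdp}(s,\sigma)| \to 0$ as $\zeta \to 1$. Let $\Delta := \min\{ \PSat^{\pmdp}(s,\sigma) - \PSat^{\pmdp}(s,\sigma') : s \in \states, \sigma,\sigma' \text{ pure}, \PSat^{\pmdp}(s,\sigma) > \PSat^{\pmdp}(s,\sigma') \}$; $\Delta > 0$ because the minimum ranges over a finite set of strictly positive values. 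Choosing $\zeta'$ so that the uniform approximation error is strictly below $\Delta/2$ ensures that any pure schedule maximizing $p_{t}(s,\cdot)$ for $\zeta > \zeta'$ also maximizes $\PSat^{\pmdp}(s,\cdot)$. This proves (1). For (2), because $\oautomata$ is good-for-CTMDP (as introduced in Appendix~\ref{app:gfm}), an optimal schedule on $\pmdp$ projects to a schedule on $\Mdp$ achieving the same satisfaction probability, which is optimal for $\phi$ on $\Mdp$.

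The main obstacle is stage (ii): controlling the approximation uniformly across schedules and states, so that a single threshold $\zeta'$ works simultaneously for all $s$. Reducing to the finite set of pure schedules via Blackwell optimality, together with the Borel--Cantelli argument inside end-components, is the technical crux; once that is in place, the discreteness of the set of attainable satisfaction probabilities yields the uniform threshold essentially for free.
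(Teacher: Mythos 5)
Your argument is sound and follows the same high-level reduction as the paper (average reward in $\augmdp$ $\equiv$ probability of absorption in $t$; absorption probability $\approx$ satisfaction probability as $\zeta\to 1$; the GFM property lifts optimality from $\pmdp$ to $\Mdp$), but you execute the middle step quite differently. The paper abstracts time away first---Lemma~\ref{lem:embedded} shows that for pure schedules these reachability/satisfaction probabilities depend only on the embedded discrete-time products $\Mdp_{\mathcal{E}}\times\oautomata$ and $\augmdp_{\mathcal{E}}$---and then imports the threshold $\zeta'$ as a black box from Theorem~3 of \cite{HahnPSSTW19}, closing with the remark that reward rate $1$ in $t$ ties average reward to reaching $t$. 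You instead re-derive that discrete-time result from scratch: the end-component decomposition with the Borel--Cantelli/coupling argument, the error bound $(1-\zeta)\cdot\mathbb{E}[\#\text{transient accepting visits}]$ uniform over the finitely many pure schedules, and the minimum-gap $\Delta$ argument give an explicit, quantitative, self-contained threshold, whereas the paper's route buys brevity and a clean separation of the continuous-time content (the embedded-MDP lemma) from the known DTMDP reward-translation theorem. Your stage (i), which proves the average-reward/absorption-probability identity for every schedule, is in fact slightly cleaner than the paper's one-directional closing remark. Two points worth tightening: the Borel--Cantelli step is most safely phrased via the coupling identity $p_t(s,\sigma)=\mathbb{E}\bigl[1-\zeta^{N}\bigr]$, with $N$ the number of accepting-state visits under $\sigma$ in $\pmdp$; and your uniform error bound holds only over pure schedules, so you should state explicitly that the optimal average-reward (equivalently, reachability) value in $\augmdp$ is attained by a pure schedule---your appeal to Theorem~\ref{corollary:1} covers this, and the paper's own proof operates at the same level of granularity, so this is presentational rather than a genuine gap.
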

Detailed proof of this theorem is provided in Appendix~\ref{app:sat}.
From the above theorem, we have that for a large $\zeta$ value, a schedule maximising the expected average reward in $\augmdp$ also maximizes the probability of satisfying the $\omega$-regular property in $\pmdp$.
Therefore, when the CTMDP is known, the problem of satisfaction semantics of an $\omega$-regular property is reduced to an expected average reward objective.
\paragraph{The case of unknown CTMDP.} 
Recall that we consider a CTMDP model with unknown rate and transition structure.
For such unknown CTMDP models, 
an RL algorithm cannot construct the product $\Mdp \times \oautomata$ explicitly.
 From Theorem \ref{theorem:4.1}, we can conclude that any schedule maximising the expected average reward that is accrued by visiting the sink state $t$ in $\augmdp$ where $\zeta>\zeta'$ for some $\zeta' \in (0,1)$ also maximizes the probability of satisfying the $\omega$-regular objective $\phi$ in $\pmdp$.
This leads to a very simple model-free RL algorithm which does not require the augmented product CTMDP $\Mdp^\zeta$ to be constructed explicitly.
We define the following reward function $rew'$ to be used by the RL algorithm: 
\[
rew'((s,q),a) = \begin{cases}
1 \text{\quad with probability $1-\zeta$ if $(s,q)$ is} \\
\text{\quad \quad accepting} \\
0  \text{\quad otherwise}
\end{cases}
\]
Recall that in the augmented product $\Mdp^\zeta$, for each action from an accepting state, we add a transition to sink state $t$ with probability $1-\zeta$, and give a reward of $1$ for staying in $t$ per unit time.
The RL algorithm simulates this in the following way: When a transition from an accepting state is visited, the learning agent tosses a biased coin and obtains a reward of $1$ with probability $1-\zeta$.
Therefore, any schedule maximising the expected average reward w.r.t. $rew'$ also maximizes the probability of satisfying the objective.
As Theorem~\ref{corollary:1} shows the existence of Blackwell optimal schedules in CTMDPs, we can conclude that for a high enough discount factor, any off-the-shelf model-free RL algorithm for CTMDP maximising the expected discounted reward gives an optimal schedule maximising the satisfaction of $\phi$. 
A pseudocode of our algorithm is given in Appendix~\ref{algo}.


\section{RL for Expectation Semantics}
\label{sec:d_time}
\begin{table*}[t!]
  \centering
  \begin{tabular}[c]{l|cccccccccccccc}
Name & states & prod. & Sat. Prob. & Est. Sat. & Time 1 & Exp. Prob. & Est. Exp. & Time 2\\\hline
\texttt{RiskReward}  & 4 & 8 & 1 & 1    & 1.713 & 0.9 & 0.9 & 0.967\\
\texttt{DynamicPM-tt\_3\_qs\_2}  & 816 & 825 & 1 & 1    & 3.586 & 1 & 1 & 3.62\\
\texttt{QS-lqs\_1\_rqs\_1\_jt\_2}  & 266 & 282 & 1     & 1    & 3.401 & 1 & 1 &3.486 \\
 \texttt{QS-lqs\_1\_rqs\_1\_jt\_5}  & 3977 & 4152 & 1     & 1    &  5.482 & 1 & 1 & 5.524\\
\texttt{QS-lqs\_2\_rqs\_2\_jt\_3}  & 11045 & 24672 & 1     & 1 & 15.158 & 1 & 1 &  15.395 \\
\texttt{ftwc\_001\_mrmc}  & 82 & 122 & 0.999779 & 0.999779    &  94.288 &0.999779 & 0.999779  &98.256\\
\texttt{PollingSystem-jt1\_qs4}  & 348 & 352 & 1     & 1    & 3.423 & 1 & 1 &3.421\\
\texttt{PollingSystem-jt1\_qs7}  & 1002 & 1006 & 1     & 1    & 3.576 & 1 & 1 &3.580 \\
\texttt{SJS-procn\_6\_jobn\_2}  & 17 & 21 & 1     & 1    & 3.253 & 1 & 1 & 3.257\\
\texttt{SJS-procn\_2\_jobn\_6}  & 7393 & 7405 & 1     & 1    & 4.336 & 1 & 1 & 4.234
 \end{tabular}
 \caption{\label{tab:experiment}Q-learning results.  The default values of the learner
    hyperparameters are: $\zeta = 0.99$ (for satisfaction semantics), $\epsilon=0.1$ (used in picking $\epsilon$-greedy actions in Q-learning), $\beta=0.01$ (learning rate),
    tol$=0.01$ (tolerance for numerical approximation), ep-l$=300$ (episode length), and ep-n$=20000$ (episode numbers).  Times are in seconds.}
\end{table*}

We study the expectation semantics of $\omega$-regular objective and show that the problem can be reduced to maximising the expected average reward problem in CTMDPs.
Using Theorem~\ref{corollary:1}, this reduces to maximising expected discounted reward for a large discount factor.
We then describe the corresponding reward machine to maximize the \emph{expected satisfaction time} in the good states.

\paragraph{Reduction to Average Reward.} For an $\omega$-regular objective $\phi$, let $\oautomata$ be a \textbf{GFM} corresponding to $\phi$ with a set $F$ of B{\"u}chi accepting states. Let $\Mdp$ be a CTMDP and $\Mdp \times \oautomata$ be the product CTMDP of $\Mdp$ and $\oautomata$.
For a state $s$ in $\Mdp \times \oautomata$, we define the \emph{expected satisfaction time} of a schedule $\sigma$ from starting state $s$ as:
\[
\ESat^{\Mdp {\times} \oautomata}_{\sigma}(s) {=} \mathbb{E}^{\Mdp \times \oautomata}_{\sigma}(s) \set{\liminf_{n \rightarrow \infty} \frac{\sum_{i=1}^{n} [X_i \in F^{\times}] {\cdot} D_i}{T_n}}.
\]
It gives the long-run expected average time spent in the accepting states. 
The reward rate function $r' : S \rightarrow \{0,1\}$ for $~{\Mdp \times \oautomata}$ is defined such that $r'(s) = 1$ if $s \in F^{\times}$, and $r'(s) = 0$, otherwise. Thus the reward is $r'(s)\cdot t = t$ for $s \in F^{\times}$ if $t$ time is spent in $s$.


The following lemma (proof in Appendix~\ref{app:expt}) gives an equivalence between the expected satisfaction time and expected average reward obtained in $\pmdp$.
\begin{lemma} \label{lemma:6.2}
For a product CTMDP $\Mdp \times \oautomata$ where $\oautomata$ is a \textbf{GFM} for an $\omega$-regular objective and for a schedule $\sigma$, the expected average reward obtained w.r.t. the reward function $r'$ is equal to the expected satisfaction time in $~{(\pmdp)}$ and there exists a pure schedule that maximizes this.
\end{lemma}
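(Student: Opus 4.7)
The plan is to unfold the definition of the expected average reward under $r'$, verify that it matches the expected satisfaction time modulo the position of $\liminf$, and then use Theorem~\ref{corollary:1} to obtain a pure maximizer.

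First, I would substitute $r'$ into the definition of $\avgobjective^{(\Mdp \times \oautomata)[\sigma]}(s)$. Since the action reward is identically zero and the state-delay reward accumulated over $[T_{n-1}, T_n]$ equals $r'(X_n)\,(T_n - T_{n-1}) = [X_n \in F^\times] \cdot D_n$, the definition simplifies to
\[
\avgobjective^{(\Mdp \times \oautomata)[\sigma]}(s) \;=\; \liminf_{N\to\infty}\, \mathbb{E}^{\Mdp \times \oautomata}_{\sigma}(s)\!\left[\frac{1}{T_N}\sum_{n=1}^{N}[X_n\in F^\times]\cdot D_n\right],
\]
which agrees with $\ESat^{\Mdp \times \oautomata}_{\sigma}(s)$ except that the $\liminf$ sits outside the expectation in the former and inside in the latter.

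Second, I would justify the exchange of $\liminf$ and $\mathbb{E}$ for every pure schedule $\sigma$. Under such $\sigma$, the induced process $(\Mdp \times \oautomata)^{[\sigma]}$ is a finite-state CTMC. On each bottom strongly connected component $B$ reachable from $s$, the ratio-ergodic theorem for finite CTMCs implies that $\tfrac{1}{T_N}\sum_{n=1}^{N}[X_n\in F^\times]\cdot D_n$ converges almost surely to the stationary time-fraction spent in $F^\times \cap B$. Globally, the ratio converges almost surely to a bounded random variable determined by the BSCC eventually entered. Since the ratio is uniformly bounded in $[0,1]$ (its numerator never exceeds $T_N$), bounded convergence applies and the $\liminf$ can be moved inside the expectation, giving $\avgobjective^{(\Mdp \times \oautomata)[\sigma]}(s) = \ESat^{\Mdp \times \oautomata}_{\sigma}(s)$ for every pure $\sigma$.

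Third, I would invoke Theorem~\ref{corollary:1} applied to the product CTMDP with reward function $r'$: this yields a Blackwell-optimal pure schedule $\sigma^*$ maximizing $\avgobjective^{(\Mdp \times \oautomata)[\sigma]}(s)$ over all schedules. By the equivalence proved above on pure schedules, $\sigma^*$ is also an optimizer of $\ESat^{\Mdp \times \oautomata}_{\sigma}(s)$. For randomized $\sigma$, Fatou's lemma yields $\ESat^{\Mdp \times \oautomata}_{\sigma}(s) \leq \avgobjective^{(\Mdp \times \oautomata)[\sigma]}(s) \leq \avgobjective^{(\Mdp \times \oautomata)[\sigma^*]}(s) = \ESat^{\Mdp \times \oautomata}_{\sigma^*}(s)$, so no randomized schedule can improve upon $\sigma^*$.

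The principal obstacle is the $\liminf$-versus-$\mathbb{E}$ swap when $(\Mdp \times \oautomata)^{[\sigma]}$ is reducible: rather than invoking a single ergodic theorem globally, one must condition on the random BSCC eventually entered, establish that the long-run time-fraction in $F^\times$ is almost surely constant on each BSCC (so a bounded random variable overall), and only then apply bounded convergence. A secondary subtlety is that, a priori, Fatou's lemma only gives the $\ESat \leq \avgobjective$ direction for general schedules, so one cannot bypass the pure-schedule analysis; it is precisely the equality on pure schedules combined with Theorem~\ref{corollary:1} that closes the loop.
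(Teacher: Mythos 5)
Your proof is correct and rests on the same core observation as the paper's---with the reward rate $r'$, the accumulated reward along a run is exactly the time spent in $F^{\times}$, so the average-reward and satisfaction-time functionals describe the same pathwise quantity---but you are considerably more careful about a point the paper glosses over. The paper's proof is purely pathwise: it notes that for every run the average reward equals the satisfaction time and immediately concludes equality of the two expected values, then cites Puterman for the existence of a pure average-reward-optimal schedule; it never addresses that $\avgobjective$ is defined with the $\liminf$ outside the expectation while $\ESat$ has it inside. You handle precisely this interchange: for pure schedules the induced finite CTMC gives almost-sure convergence of the time-fraction on each BSCC, and boundedness of the ratio in $[0,1]$ lets you move the $\liminf$ inside, yielding genuine equality; for arbitrary schedules you settle for the Fatou inequality $\ESat \leq \avgobjective$ and close the argument by combining the pure-schedule equality with the Blackwell-optimality result (Theorem~\ref{corollary:1}), rather than citing Puterman directly as the paper does. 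Note that this means you prove a slightly weaker statement than the lemma's literal wording (equality for \emph{every} schedule versus equality on pure schedules plus an inequality for the rest), but your version is exactly what the downstream reduction needs, and arguably it is the honest form of the claim given the paper's stated definitions; under the paper's implicit pathwise reading of the average reward, its shorter argument also goes through. In short: same skeleton, but your treatment of the $\liminf$--expectation order and of randomized schedules supplies rigor the paper's proof leaves implicit, at the cost of invoking ergodic theory for CTMCs and Fatou's lemma.
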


Using the results from Lemma~\ref{lemma:6.2} and Theorem~\ref{corollary:1}, we can conclude that a schedule maximising the discounted reward objective for a large discount factor in $\pmdp$ with reward function $r'$ also maximizes the expected satisfaction time.

\paragraph{Algorithm for Expectation Semantics.}
Here, we provide a brief description of the algorithm.  
The Q-function is defined on the states of the product CTMDP, i.e, $\mathcal{Q}_f: (\states \times Q) \times \actions \rightarrow \mathbb{R}$ where $\states$ is the set of states of the CTMDP $\Mdp$ and $Q$ is the set of states of the \textbf{GFM} $A$.
Initially, the state space is unknown to the agent and the agent will have information only on the initial state. 
States seen are stored in a Q-table where the Q-value of the state is stored. 
The initial value of a state in the Q-table is zero.
The number of episodes to be conducted and the length of each episode are defined by the user, let these be denoted by $k$ and $eplen$ respectively.
In each episode, the RL agent picks an action from its current state in the CTMDP according to the RL schedule and observes the next state and the time spent in the current state. 
It also picks the transition in the GFM based on the observed state in the CTMDP. 
For each transition taken, the reward obtained is based on the reward function $r'$.
The Q-function is updated according to the Q-learning rule defined in Section~\ref{prelims}. 
An episode ends when the length of the episode reaches $eplen$. 
After the completion of $k$ episodes, we obtain a schedule $\sigma$ by choosing the action that gives the highest Q-value from each state.
The schedule learnt by the Q-learning algorithm converges to an optimal schedule as the number of training episodes tend to infinity.
We provide a pseudocode of the algorithm 
in Appendix~\ref{algo}. 


\section{Experimental Evaluation}
\label{sec:expt}

We implemented the reward schemes described in the previous sections in a C++-based tool \textsc{Mungojerrie}~\cite{hahn2021mungojerrie} 
which reads CTMDPs described in the PRISM language \cite{kwiatk11} and $\omega$-regular automata written in the \emph{Hanoi Omega Automata} format \cite{Babiak15}. 
Our implementation provides an Openai-gym~\cite{Brockm16} style interface for RL algorithms and supports probabilistic model checking for CTMDPs based on uniformization. 

Table~\ref{tab:experiment} shows the results of the evaluation of our algorithms on a set of CTMDP benchmarks from the Quantitative Verification Benchmark set ({\tt https://qcomp.org}).
\texttt{RiskReward} is based on Example~\ref{example:1} with $\lambda(0,b) = 10$ and $r = 9$.
\texttt{DynamicPM-tt\_3\_qs\_2}  models encode dynamic power management problem based on~\cite{DPM00}.
Queuing System (QS) models \texttt{QS-lqs\_i\_rqs\_j\_jt\_k} are based on a CTMDP modelling of queuing systems with arrival rate $i$, service rate $j$, and jump rate $k$ as the key parameters.
\texttt{ftwc\_001\_mrmc} models consist of two networks of $n$ workstations each where each network is interconnected by a switch communicating via a backbone. The components may fail arbitrarily, but can only be repaired one at a time. The initial state is the one where all components are functioning, and the goal state is the one where in both networks either all the workstations or all the switches are broken.
The Polling System examples \texttt{PollingSystem-jt1\_qsj} consist of $j$ stations and $1$ server. Here, the incoming requests of $j$ types are buffered in queues of size $k$ each, until they are processed by the server and delivered to their station. The system starts in a state with all the queues being nearly full. We consider 2 goal conditions: (i) all the queues are empty and (ii) one of the queues is empty.
Finally, the stochastic job scheduling (SJS) examples \texttt{SJS-procn\_i\_jobn\_j} model multiple processors ($i$) with a sequence of independent jobs ($j$) with a goal job completion.

The results are summarized in Table~\ref{tab:experiment}.  
For each model, we provide the number of states in the CTMDP ({\bf states}) and in the product CTMDP ({\bf prod}), the probability of satisfaction ({\bf Sat. Prob.})) of the objective for the satisfaction semantics, estimated probability for the satisfaction semantics ({\bf Est. Sat.}) by the RL algorithm, and time ({\bf Time 1}) spent in learning that schedule. 
The probability of satisfaction for the expectation semantics ({\bf Exp. Prob.}), estimated probability by the RL algorithm ({\bf Est. Exp.}), and the learning time ({\bf Time 2}) for the expectation semantics are provided next. All of our timings and values are averaged over three runs with randomly chosen seeds.
We kept the default values for the hyperparameters as shown in 
Table~\ref{tab:experiment}.


Our experimental results demonstrate that the proposed RL algorithms are effective in handling medium sized CTMDPs. 
Since for the expectation semantics, the optimal probability was computed using linear programming, we can notice that the RL algorithm efficiently estimates the optimal probability and computes the optimal schedule.




\section{Conclusion}
\label{sec:conc}
Continuous-time MDPs are canonical models to express nondeterministic and stochastic behavior under dense-time semantics.
Reinforcement learning (RL) provides a sampling-based method to compute an optimal schedule in the absence of an explicit environment model. 
The RL approach for CTMDPs has recently received considerable attention~\cite{GZ16,RS13}. 
Our work enabled the specification of learning objectives in CTMDPs as $\omega$-regular specifications.
To accommodate temporal modelling, we consider two semantics of $\omega$-regular specifications (that include LTL objectives) and provide translations to scalar reward forms amenable for model-free reinforcement learning.
We believe that this work will open doors to study and develop model-free reinforcement learning for continuous-time models that go beyond CTMDPs and allow temporal constraints on planner's choices and residence-time requirements. 


\section*{Acknowledgement}
This work is partially supported by DST-SERB grant SRG/2021/000466 and by the National Science Foundation (NSF) grant CCF-2009022 and by NSF CAREER award CCF-2146563.

\bibliography{references}
\pagebreak

\onecolumn
\appendix
\begin{center}
    {\LARGE Appendix}
\end{center}
\section{Example of end-components} \label{app:EC}
We show an example of a CTMDP in Figure~\ref{fig:example} which has three end-components $\widehat{A}, \widehat{B}, \text{ and } \widehat{C}$.

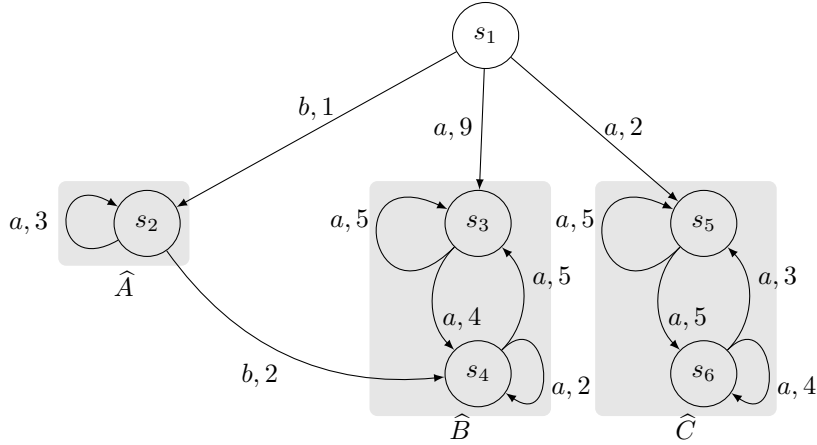
\begin{figure}[h]
	\centering
	{
		\scalebox{1}{
			\begin{tikzpicture}[auto]
				\node[state] (s1) at (-3,0.5) {$s_1$};
				\node[state] (s6) at (-0.1,-4) {$s_6$};
				\node[state] (s5) at (-0.1,-2) {$s_5$};
				\node[state] (s3) at (-3.1,-2) {$s_3$};
				\node[state] (s4) at (-3.1,-4) {$s_4$};
				\node[state] (s2) at (-7.5,-2) {$s_2$};
			
				\path[->]
					(s1) edge node[right]{$a,2$} (s5)
					(s1) edge node[left]{$a,9$} (s3)
					(s1) edge node[above] {$b,1$} (s2)
					(s5) edge[out=225,in=135] node[below right]{$a,5$} (s6)
					(s5) edge[out=225,in=155,looseness=8] coordinate (e5loop) node[above left]{$a,5$} (s5)
					(s6) edge[out=45,in=-35,looseness=4] coordinate (e6loop) node[below right] {$a,4$} (s6)
					(s6) edge[out=45,in=-45] node[above right] {$a,3$} (s5)
					(s3) edge[out=225,in=135] node[below right]{$a,4$} (s4)
					(s3) edge[out=225,in=155,looseness=8] coordinate (e3loop) node[above left]{$a,5$} (s3)
					(s4) edge[out=45,in=-35,looseness=4] coordinate (e4loop) node[below right] {$a,2$} (s4)
					(s4) edge[out=45,in=-45] node[above right] {$a,5$} (s3)
					(s2) edge[loop left] coordinate (e2loop) node[left, inner sep=7pt]{$a,3$} (s2)
					(s2) edge[out=-80,in=180,bend right] node[below left] {$b,2$} (s4);
				
				\node[rectangle,rounded corners=3pt,draw=none,fill=black,fill opacity=0.1,fit=(s2) (e2loop)] (rectA) {};
				\node[rectangle,rounded corners=3pt,draw=none,fill=black,fill opacity=0.1,fit=(s3) (s4) (e3loop) (e4loop)] (rectB) {};
				\node[rectangle,rounded corners=3pt,draw=none,fill=black,fill opacity=0.1,fit=(s5) (s6) (e5loop) (e6loop)] (rectC) {};
				\node[] at ($(rectA) + (0,-0.8)$) {$\widehat{A}$};
				\node[] at ($(rectB) + (0,-1.7)$) {$\widehat{B}$};
				\node[] at ($(rectC) + (0,-1.7)$) {$\widehat{C}$};
			\end{tikzpicture}
		}
	}
	\caption{A CTMDP with three end-components.}\label{fig:example}
\end{figure}

\section{Example of Uniformization}
We give an example of a non-uniform CTMDP and its uniformized version.
\label{uniform}
\begin{figure}[h]
    \centering
    \begin{subfigure}[b]{0.48\linewidth}
\begin{tikzpicture}[shorten >=1pt, node distance=3 cm, on grid, auto,thick,initial text=]
\begin{scope}
\node (l0) [state,fill=safecellcolor]  {$q_0$};
\node (l1) [state, fill=safecellcolor, right = of l0,xshift = -0.75cm]   {$q_1$};
\end{scope}
 \begin{scope}
\path [->]
    (l0) edge [bend left]  node [above] {$a_1, 3$}   (l1)
    (l0) edge [bend right]  node [below] {$a_2,6$}   (l1)
    (l1) edge [loop above] node [above] {$a_3,2$}   ()
    ;
\end{scope}
\end{tikzpicture}
\caption{Non-uniform CTMDP where the exit-rates are different for various state action pairs. } \label{fig:PM1}
\end{subfigure}
\hspace{0.5em}
\begin{subfigure}[b]{0.45\linewidth}
\begin{tikzpicture}[shorten >=1pt, node distance=3 cm, on grid, auto,thick,initial text=]
\begin{scope}
\node (l1) [state, fill=safecellcolor]  {$q_1$};
\node (l0) [state, fill=safecellcolor, left = of l1,xshift = 0.75cm]  {$q_0$};
\end{scope}
\begin{scope}
\path [->]
     (l0) edge [loop above] node [above] {$a_1,3$}
    (l0) edge [bend left]  node [above] {$a_1,3$}   (l1)
    (l0) edge [bend right]  node [below] {$a_2,6$}   (l1)
    (l1) edge [loop above] node [above] {$a_3,6$}   ()
    ;
\end{scope}
\end{tikzpicture}
\caption{A Uniform CTMDP where the exit-rate for every state-action pair is $6$. } \label{fig:PM2}
\end{subfigure}
\caption{Uniformization of a CTMDP}
\label{fig:P1}
\end{figure}
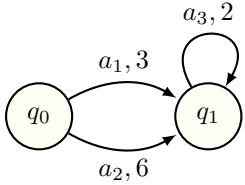
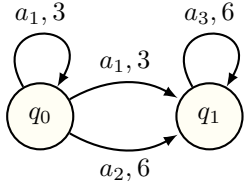

\section{Good-for-CTMDP B\"uchi Automata}
\label{app:gfm}
In this section, we provide a formal definition of good-for-CTMDP automata.
\paragraph{Product CTMDP.}
 Given a \emph{labelled} CTMDP $(\Mdp, \atomicprop, \labelling)$ where $\atomicprop$ is a set of atomic propositions, and $\labelling: \states \rightarrow 2^\atomicprop$ is a labelling function and a   B{\"u}chi automaton $\oautomata = (2^{\atomicprop}, \ostate, \oinitstate, \otrans, \acceptingc)$, the product CTMDP is defined as $\Mdp \times \oautomata = ((\states \times \ostate),(\initstate,\oinitstate),\actions,\transR^{\times},\acceptingc^{\times})$ where the rates are $\transR^{\times} : (\states \times \ostate) \times \actions \times (\states \times \ostate) \rightarrow \Reals_{\geq 0}$ such that $\transR^{\times} ((s,q),a,(s',q') = \transR(s,a,s')$ if $\transR(s,a,s')>0$ and $\otrans (q,\labelling(s)) = \{q'\}$.
If $F$ is the set of accepting 
states
in $\oautomata$, then the accepting condition is a set 
$F^\times$ of states where $(s,q) \in F^\times$ iff $q \in F$. An example of a product CTMDP is given in Figure~\ref{fig:grid-world}.  

Given an MDP $\Mdp$, a B\"uchi automaton  $\oautomata$, and product $\Mdp \times \oautomata$, we define the following two  problems:
\begin{enumerate}
    \item {\bf Satisfaction Semantics.} 
    Compute a schedule of $\Mdp$ that maximizes the probability of visiting accepting states $F$ of $\oautomata$ infinitely often.
    We define the satisfaction probability $\PSat^{\Mdp \times \oautomata}(s, \sigma)$ of a schedule $\sigma$ from starting state $s$ as: 
\begin{equation*}
\Pr{}^{\Mdp \times \oautomata}_{\sigma}(s) \set{\forall_i  \exists_{j{\geq} i} [X_j \in F^\times] }.
\end{equation*}
The optimal satisfaction probability
$\PSemSat^{\Mdp \times \oautomata}_{\oautomata}(s)$ for specification $\oautomata$ 
is defined as $\sup_{\sigma \in \Sigma_{\Mdp \times \oautomata}} \Pr^{\Mdp \times \oautomata}_{\sigma}(s, \sigma)$ and we say
that $\sigma$ is an optimal schedule for $\oautomata$ if
$\PSemSat^{\Mdp \times \oautomata}_{\oautomata}(s, \sigma) (s) = \PSemSat^{\Mdp \times \oautomata}_{\oautomata}$.

\item {\bf Expectation Semantics.} Compute a schedule of $\Mdp$ that maximize the long-run expected average time spent in the accepting states of $\oautomata$.
We define the expected satisfaction time $\ESat^{\Mdp \times \oautomata}_{\oautomata}(s, \sigma)$ 
of $\sigma$ from starting state $s$ as: 
\begin{equation*}
 \mathbb{E}^{\Mdp \times \oautomata}_{\sigma}(s) \set{ \liminf_{n \rightarrow \infty} \frac{\sum_{i=1}^{n} [X_i {\in} F] D_i}{T_n}}.
\end{equation*}
The optimal expected satisfaction time
$\ESat^{\Mdp \times \oautomata}_{\oautomata}(s)$ for $\oautomata$ is defined as $\sup_{\sigma \in \Sigma_{\Mdp}} \ESat^{\Mdp \times \oautomata}_{\oautomata}(s, \sigma)$ and we say that $\sigma \in \Sigma_\Mdp$ is an optimal expected-satisfaction schedule for $\oautomata$ if
$\ESat^{\Mdp \times \oautomata}_{\oautomata}(s, \sigma) = \ESat^{\Mdp \times \oautomata}_{\oautomata} (s)$.
\end{enumerate}

We call a B{\"u}chi automaton $\oautomata = (2^{\atomicprop}, \ostate, \oinitstate, \otrans, \acceptingc)$ good-for-CTMDP if for every \emph{labelled} CTMDP $(\Mdp, \atomicprop, \labelling)$ where $\atomicprop$ is a set of atomic propositions, we have that 
\begin{eqnarray*}
    \PSemSat^{\Mdp}_{\oautomata}(s, \sigma) &=&\PSat^{\Mdp \times \oautomata}_{\oautomata}(s, \sigma)~\text{ and }\\
    \ESemSat{}^{\Mdp}_{\oautomata}(s, \sigma) &=&
    \ESat^{\Mdp \times \oautomata}_{\oautomata}(s, \sigma).
\end{eqnarray*}
A good-for-CTMDP automaton allows the computation of the optimal schedule by solving the corresponding problem on the product CTMDP. 
If a B\"uchi automaton is good-for-MDP, then one can show via uniformization that it is also good-for-CTMDPs.
There exists several syntactic characterizations of good-for-MDP automata including suitable limit-deterministic B\"uchi automata (SLDBA)~\cite{sickert2016limit} and slim automata~\cite{HPSS20}.
Moreover, every LTL specification can be effectively converted into a GFM B\"uchi automata.

\section{Need For Memory for Optimal Schedules}
\label{exmp:memory}
\begin{figure}[h]
    \centering
     \begin{tikzpicture}[shorten >=1pt, node distance=2.3 cm, on grid, auto,thick,initial text=]
\begin{scope}[every node/.style={scale=1}]
\node (l0) [state, fill = safecellcolor] {$q_0$};
\node (l1) [state,right = of l0, fill = goodcellcolor]   {$q_1$};
\node (l2) [state,left = of l0,fill = badcellcolor]   {$q_2$};
\end{scope}
 \begin{scope}
\path [-stealth, thick]
    (l0) edge [bend left]  node [above] {$a_1,r_1$}   (l1)
    (l0) edge [bend right]   node [above] {$a_2,r_2$}   (l2)
    (l1) edge [bend left]  node [below] {$d_1,r_3$}   (l0)
    (l1) edge  [loop above] node [above] {$d_2,r_4$}   ()
    (l2) edge  [loop above] node [above] {$c_1,r_5$}   ()
    (l2) edge [bend right] node [below] {$c_2, r_6$} (l0)
    ;
\end{scope}
\end{tikzpicture}
  \label{fig:exmp2}
  \end{figure}
  
\begin{example}[Why memory is required to satisfy $\omega$-regular properties.]
\label{ex:mem}
Consider the CTMDP given above, let the atomic propositions be $\mathtt{b}$ and $\mathtt{g}$ representing blue and green respectively.
Thus the labels on states $q_1$ and $q_2$ are defined as, $\labelling(q_1) = \neg \mathtt{b} \land \mathtt{g}$ and $\labelling(q_2) = \mathtt{b} \land \neg \mathtt{g}$.   
Consider the $\omega$-regular property to be satisfied be $\phi = \always \eventually(\mathtt{b}) \land \always \eventually(\mathtt{g})$. 

We can observe that for a schedule to satisfy this property, both $q_1$ and $q_2$ have to be seen infinitely often. 
As there are no transitions between $q_1$ and $q_2$, both the states can be visited infinitely often only via $q_0$. 
Therefore, the schedule cannot be memoryless as choosing any one action from $q_0$ would not satisfy $\phi$.

\end{example}


\section{Blackwell Optimality In CTMDP}
\label{sec:blackwell_optimality}

In this section, we provide a uniformization based proof of Theorem~\ref{corollary:1}.
Consider a CTMDP $\Mdp$, a pure schedule $\sigma$, and a continuous-time discounting with parameter $\dfactor > 0$. 
We define a function $\ctmdprate : \states \times \actions \rightarrow [0,1) $ where $\ctmdprate(s,a) = \frac{\lambda(s,a)}{\lambda(s,a) + \dfactor}$ where $\dfactor > 0$.
We call $\ctmdprate(s,a)$ the \emph{discount rate} of the state-action pair $(s,a)$ in $\Mdp$.
So, the expected discounted reward (also known as value of $s$) $\discobjective^{\Mdp[\sigma]} (\alpha)(s)$ is given by
\begin{equation}\label{eq:7.1}
\begin{split}
     \rho(s,a_{\sigma}) + 
     \ctmdprate(s,a_{\sigma})
     \sum_{s' \in s} \pmtrx(s,a_\sigma,s') 
     \discobjective_{\sigma}^{\Mdp}(\alpha)(s')
\end{split}
\end{equation}
Consider a DTMDP $\mathcal{N}$, a schedule $\sigma$, and a discount rate $0\leq \drate < 1$. Let $\valuesigma_{\drate}(\mathcal{N},s)$ denote the total discounted value from state $s$ in $\mathcal{N}$ under schedule $\sigma$.

A pure schedule $\bschedule$ is \emph{Blackwell optimal} in $\mathcal{N}$ if there exists a threshold discount rate $0 \leq \thrate < 1$ such that for any discount rate $\thrate \leq \drate < 1$, we have $\valuestar_{\drate}(\mathcal{N},s) \geq \valuesigma_{\drate}(\mathcal{N},s)$ for all $\sigma \in \Sigma_{\mathcal{N}}$. It is known that a Blackwell optimal schedule maximises both discounted and average reward objectives in DTMDPs.
From \cite{puterman2014markov}~(Thm 10.1.4), we have that for every DTMDP $\mathcal{N}$, there exists a Blackwell optimal \emph{pure} schedule $\bschedule$, and $\bschedule$ also maximises the average reward in $\mathcal{N}$.
Now, given a CTMDP $\Mdp$, let $C$ be a constant such that $C \geq \lambda(s,a)$ for all state-action pairs in $\Mdp$. 
Let $\uMdp$ be the uniformized CTMDP of $\Mdp$ with constant exit rate $C$, and let $\upmtrx$ be the probability matrix of $\uMdp$. As the exit rate $\lambda(s,a) = C$ for all $s\in \states$ and $a \in \actions{}(s)$, we have that $\ctmdprate(s,a) = \frac{C}{C+\dfactor}$ for all state-action pairs. We denote this discount rate by $\udrate$. 
The value of a state $s$ under a schedule $\sigma$ in $\uMdp$, 
denoted $\discobjective^{\uMdp^{\sigma}}(\alpha)(s)$
is given by, 
\begin{equation}\label{eq:7.2}
     \Bar{\rho}(s,a_\sigma) + \udrate \sum_{s' \in s} \upmtrx(s,a_\sigma,s') \discobjective^{\uMdp^{\sigma}}(\alpha)(s')
\end{equation}
where $\Bar{\rho}(s,a) = \rho(s,a)\cdot \frac{\dfactor + \lambda(s,a)}{\dfactor + C} $. 
We extend the above result of existence of Blackwell optimal schedules in DTMDPs to uniform CTMDPs.
\begin{lemma} \label{lemma:1}
For a uniform CTMDP $\uMdp$, there exists a Blackwell optimal schedule $\bschedule$. Further, $\bschedule$ also maximises the expected average reward in $\uMdp$.
\end{lemma}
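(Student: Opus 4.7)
The plan is to leverage the Bellman equation~\eqref{eq:7.2}, which shows that for the uniform CTMDP $\uMdp$ with continuous discount $\dfactor > 0$ the value of any pure schedule $\sigma$ satisfies the same recursion as a DTMDP with transition kernel $\upmtrx$, effective discount $\drate := \udrate = C/(C+\dfactor)$, and reward $\bar\rho(s,a) = rew(s,a) + (\drate/C) \cdot rew(s)$ (using $1/(\dfactor+C) = \drate/C$). Note that $\drate$ ranges over $(0,1)$ as $\dfactor$ ranges over $(0,\infty)$, and $\drate \to 1^-$ corresponds to $\dfactor \to 0^+$. Under this correspondence I will mimic Puterman's rational-function proof of Blackwell optimality, and then pass to a limit for the average-reward claim.

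First I would show that, for each pure schedule $\sigma$, the value $\discobjective^{\uMdp^{\sigma}}(\dfactor)(s)$ is a rational function of $\drate$ on $(0,1)$. Solving the linear system induced by~\eqref{eq:7.2} gives $v^\sigma = (I - \drate\, \upmtrx^\sigma)^{-1} \bar\rho^\sigma$; Cramer's rule makes each entry of the inverse rational in $\drate$, and $\bar\rho^\sigma$ is affine in $\drate$. Since the set of pure schedules is finite, for each ordered pair $(\sigma_1,\sigma_2)$ and state $s$ the difference $\discobjective^{\uMdp^{\sigma_1}}(\dfactor)(s) - \discobjective^{\uMdp^{\sigma_2}}(\dfactor)(s)$ is either identically zero on $(0,1)$ or has only finitely many zeros there. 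Taking the largest zero across the (finitely many) non-trivial such differences yields a threshold $\thrate \in [0,1)$ above which no such difference changes sign, so the value ordering among pure schedules is constant on $[\thrate,1)$. Standard contraction arguments on the Bellman operator (valid because $\drate < 1$) guarantee a pure optimal schedule at every $\drate$, and on $[\thrate,1)$ this must be a single pure schedule $\bschedule$. Setting $\alpha_0$ via $\thrate = C/(C+\alpha_0)$ then gives an $\alpha_0 > 0$ such that $\bschedule$ maximises $\discobjective^{\uMdp^{\sigma}}(\dfactor)$ for every $\dfactor \in (0,\alpha_0]$, establishing Blackwell optimality.

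For the average-reward claim I would invoke the Tauberian identity $\lim_{\dfactor \to 0^+} \dfactor \cdot \discobjective^{\uMdp^{\sigma}}(\dfactor)(s) = \avgobjective^{\uMdp^{\sigma}}(s)$ for every stationary schedule $\sigma$; this is the CTMDP analogue of the DTMDP identity $\lim_{\drate \to 1^-}(1-\drate) v^\sigma_\drate = g^\sigma$ and, in the uniform case, follows by direct computation on the explicit expression for $v^\sigma$ together with the fact that each stage has mean length $1/C$. Since $\discobjective^{\uMdp^{\bschedule}}(\dfactor)(s) \geq \discobjective^{\uMdp^{\sigma}}(\dfactor)(s)$ for every $\sigma$ and every $\dfactor \in (0,\alpha_0]$, multiplying by $\dfactor$ and sending $\dfactor \to 0^+$ preserves the inequality and yields $\avgobjective^{\uMdp^{\bschedule}}(s) \geq \avgobjective^{\uMdp^{\sigma}}(s)$ for every $\sigma$ and $s$.

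The main obstacle is that the effective DTMDP reward $\bar\rho(s,a)$ itself depends on the discount parameter through the state-delay term $rew(s)/(\dfactor+C)$, so Puterman's Theorem~10.1.4 cannot be invoked verbatim as a black box with fixed rewards. My plan addresses this by observing that the dependence on $\drate$ is still rational, so the finite-pure-schedules rational-function argument underlying Puterman's proof carries over with only cosmetic modifications.
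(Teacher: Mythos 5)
Your proof is correct in outline and reaches the same statement, but it takes a more self-contained route than the paper. The paper's proof introduces a DTMDP $\mathcal{N}$ with transition matrix $\upmtrx$ and one-step reward $\Bar{r}$, observes that its value equations coincide with those of $\uMdp$ for every pure schedule, and then invokes Puterman's results as black boxes: Theorem~10.1.4 for the existence of a Blackwell optimal pure schedule in $\mathcal{N}$ (hence, via the value identity, in $\uMdp$ among pure schedules), Theorem~11.5.2(d) to extend optimality from pure to arbitrary schedules, and ``a similar argument'' for the average-reward claim. You instead re-derive the Blackwell threshold directly by the rational-function argument (Cramer's rule applied to $(I-\drate\,\upmtrx^{\sigma})^{-1}$, finitely many pure schedules, finitely many sign changes), precisely because the effective one-step reward $\Bar{\rho}(s,a)=rew(s,a)+(\drate/C)\,rew(s)$ varies with the discount rate, so Puterman's theorem with a fixed reward function does not apply verbatim; this is a real subtlety that the paper's citation glosses over, and your treatment patches it while your affine-in-$\drate$ observation keeps the argument intact. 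For the average-reward part you use a Tauberian limit $\dfactor\cdot\discobjective^{\uMdp^{\sigma}}(\dfactor)(s)\to\avgobjective^{\uMdp^{\sigma}}(s)$, whereas the paper transfers the corresponding DTMDP fact; your final step asserts the resulting inequality ``for every $\sigma$'' although the limit identity is only justified for stationary schedules, so to be complete you should either invoke the one-sided Abelian inequality ($\liminf_{\dfactor\to 0^+}\dfactor\,\discobjective^{\uMdp^{\sigma}}(\dfactor)(s)\geq\avgobjective^{\uMdp^{\sigma}}(s)$ for arbitrary schedules) or note, as the paper does elsewhere via Puterman Theorem~11.4.6(d), that pure schedules suffice for average-reward optimality, so comparing against stationary schedules is enough. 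Net effect: your version is longer but self-contained and more careful about the discount-dependent reward; the paper's is shorter at the cost of leaning on citations whose hypotheses are not literally met.
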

\begin{proof}
Consider a DTMDP $\mathcal{N}$ with the same set of states as that of $\uMdp$, one step reward function $\Bar{r}$ and probability matrix $P_{\mathcal{N}} = \upmtrx$. For a pure schedule $\sigma$ and a discount rate $0 \leq \drate < 1$, the value of a state $s$ in $\mathcal{N}$ is 
\begin{equation}\label{eq:7.3}
    \valuesigma_{\drate}(\mathcal{N},s) = \Bar{r}(s,\sigma(s)) + \drate \sum_{s' \in s} \upmtrx(s,\sigma(s),s') \valuesigma_{\drate}(\mathcal{N},s')
\end{equation}
We observe that equation \ref{eq:7.3} is identical to equation \ref{eq:7.2} when $\udrate = \drate$. Therefore, the set of equations defining the values of states in $\mathcal{N}$ and $\uMdp$ are identical. Let this set be denoted by $E^\sigma$. From \cite{puterman2014markov}~Thm 6.1.1, we know that for each stationary schedule $\sigma$, there exists a unique solution for $E^\sigma$. The set of pure schedules in $\uMdp$ and $\mathcal{N}$ are equal as the set $S$ of states and the set $\av$ of available actions from each state are the same for both.\\*
Therefore, for a pure schedule $\sigma$, and discount rates $0 \leq \drate = \udrate < 1$, we have
\begin{equation}\label{eq:7.4}
    \valuesigma_{\udrate}(\uMdp,s) = \valuesigma_{\drate}(\mathcal{N},s) \text{\: for $\udrate = \drate$, for all states $s$}
\end{equation}
From Theorem~10.1.4 in \cite{puterman2014markov}, we know that in $\mathcal{N}$, there exist a Blackwell optimal pure schedule $\sigma^*$, and a threshold discount rate $\thrate$ such that 
\begin{align}\label{eq:7.5}
 \valuestar_{\drate}(\mathcal{N},s) \geq \valuesigma_{\drate}(\mathcal{N},s)   
\end{align}
for all $\sigma \in \Sigma_{N}$ and $\thrate \leq \drate < 1$.\\
From equations \ref{eq:7.4} and \ref{eq:7.5} we can conclude that 
\begin{equation}\label{eq:7.60}
    \valuestar_{\udrate}(\uMdp,s) \geq \valuesigma_{\udrate}(\uMdp,s)   
\end{equation}
for all $\sigma \in \Sigma_{\uMdp}^{pure}$ and $\thrate \leq \udrate < 1$.\\
From \cite{puterman2014markov}~Thm 11.5.2(d), we know that there exists an optimal pure schedule maximising the discounted reward in a CTMDP. Therefore,
\begin{equation}\label{eq:7.}
    \valuestar_{\udrate}(\uMdp,s) \geq \valuesigma_{\udrate}(\uMdp,s)   
\end{equation}
for all $\sigma \in \Sigma_{\uMdp}$ and $\thrate \leq \udrate < 1$.\\
A similar argument can be made to show that $\sigma^*$ also maximises the expected average reward in $\uMdp$.
\end{proof}

The above lemma proves the existence of a Blackwell optimal schedule in uniform CTMDPs. 
We further extend this result to general CTMDPs which is the main result of this section.

\begin{lemma}\label{lemma:2}
If $\sigma^*$ is a Blackwell optimal pure schedule in $\uMdp$, then it is also Blackwell optimal in $\Mdp$.
\end{lemma}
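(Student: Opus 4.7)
The plan is to lift Blackwell optimality from $\uMdp$ to $\Mdp$ via the value-invariance identity
\begin{equation*}
\discobjective^{\Mdp^{\sigma}}(\alpha)(s) \;=\; \discobjective^{\uMdp^{\sigma}}(\alpha)(s) \tag{$\star$}
\end{equation*}
holding for every pure schedule $\sigma$, every state $s \in \states$, and every discount parameter $\alpha > 0$. Once $(\star)$ is in hand, the hypothesis that $\bschedule$ is Blackwell optimal in $\uMdp$ transports directly to $\Mdp$, modulo a cosmetic translation between the uniform $\udrate$-parameterization of ``discount threshold'' and the state-action $\ctmdprate$-parameterization required by Theorem~\ref{corollary:1}.

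To verify $(\star)$, I would show that the CTMDP value function $s \mapsto \discobjective^{\Mdp^{\sigma}}(\alpha)(s)$ satisfies the Bellman equation defining $\discobjective^{\uMdp^{\sigma}}(\alpha)$, namely equation~(\ref{eq:7.2}) in the proof of Lemma~\ref{lemma:1}. Substituting $\bar\rho(s,a) = \rho(s,a)\cdot(\alpha+\lambda(s,a))/(\alpha+C)$ and $\udrate = C/(C+\alpha)$, and expanding $\upmtrx(s,a,s') = (\lambda(s,a)/C)\,\pmtrx(s,a,s')$ for $s' \neq s$ together with the added self-loop mass $(C-\lambda(s,a))/C$ at $s' = s$ induced by uniformization, the right-hand side of~(\ref{eq:7.2}) simplifies algebraically to the right-hand side of~(\ref{eq:7.1}). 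Both Bellman systems are strict contractions in the sup norm (their effective discount factors lie in $[0,1)$) and hence admit unique fixed points, so the two value functions must coincide.

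Next, I would translate the Blackwell threshold. Lemma~\ref{lemma:1} provides some $\thrate \in [0,1)$ such that $\bschedule$ is optimal in $\uMdp$ whenever $\udrate \geq \thrate$, equivalently whenever $\alpha \leq \alpha_0 := C(1-\thrate)/\thrate$. Writing $\lambda_{\min} := \min_{(s,a) \in \states \times \actions} \lambda(s,a) > 0$ and setting
\begin{equation*}
\ctmdpthrate^{\Mdp} \;:=\; \frac{\lambda_{\min}}{\lambda_{\min} + \alpha_0},
\end{equation*}
monotonicity of $\ctmdprate(s,a) = \lambda(s,a)/(\lambda(s,a)+\alpha)$ in $\alpha$ shows that demanding $\ctmdprate(s,a) \geq \ctmdpthrate^{\Mdp}$ for every state-action pair forces $\alpha \leq \alpha_0$, and hence $\udrate \geq \thrate$. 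Applying $(\star)$ to both sides of the inequality $\discobjective^{\uMdp^{\bschedule}}(\alpha)(s) \geq \discobjective^{\uMdp^{\sigma}}(\alpha)(s)$ then transports it to $\Mdp$ for every pure $\sigma$ and every state $s$. Since every CTMDP admits an optimal pure schedule for the expected discounted reward~\cite[Thm.~11.5.2(d)]{puterman2014markov}, optimality of $\bschedule$ against all pure schedules already upgrades to optimality against every schedule in $\Schedule$, yielding the claim.

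I expect the principal obstacle to be carrying out the algebra behind $(\star)$ cleanly: the action reward $\rho(s,a)$, the discount factor $\udrate$, and the uniformization self-loop correction each depend on $\lambda(s,a)$, and they must combine in precisely the right way to reproduce equation~(\ref{eq:7.1}) from equation~(\ref{eq:7.2}). The remaining steps---the threshold translation and the pure-to-general-schedule upgrade---are essentially bookkeeping given Lemma~\ref{lemma:1} and the cited Puterman result.
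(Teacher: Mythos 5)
Your proposal is correct and takes essentially the same route as the paper: the paper simply cites Puterman's Proposition~11.5.1 for your value-invariance identity $(\star)$ (which you instead re-derive from the two Bellman systems via the uniformization self-loop algebra and a contraction/uniqueness argument), and then performs the same threshold translation between $\gamma^{C}_{\alpha}$ and the state-action discount rates plus the same pure-to-general upgrade via the existence of optimal pure schedules. The only cosmetic divergence is your threshold $\lambda_{\min}/(\lambda_{\min}+\alpha_0)$ versus the paper's $\max_{(s,a)}\lambda(s,a)/(\lambda(s,a)+\alpha_0)$; both choices are valid (yours is the tighter one).
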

\begin{proof}
Since $\bstrategy$ is a Blackwell optimal pure schedule in $\uMdp$, there exists a threshold discount rate $\uthrate$ such that for all $\uthrate \leq \udrate < 1$, we have that 
\begin{equation}\label{eq:7.7}
    \valuestar_{\udrate}(\uMdp,s) \geq \valuesigma_{\udrate}(\uMdp,s)   
\end{equation}
for all $\sigma \in \Sigma_{\uMdp}$.\\
The set of pure schedules in $\Mdp$ and $\uMdp$ are the same. From \cite{puterman2014markov}~Thm 11.5.2(d), we know that there exists an optimal pure schedule maximising the discounted reward in $\Mdp$. \cite{puterman2014markov}~Prop 11.5.1, states that for every pure schedule $\sigma$ and a state $s$, we have that  
\begin{equation}\label{eq:7.8}
    \valuesigma_{\ctmdprate(s,\sigma(s))}(\Mdp,s) = \valuesigma_{\udrate}(\uMdp,s)   
\end{equation}
If $\uthrate = \frac{C}{C+\dfactor}$ is the threshold discount rate in $\uMdp$, then the corresponding threshold discount rate for a state $s$ in $\Mdp$ is given by 
$\ctmdpthrate(s,a) = \frac{\lambda(s,a)}{\lambda(s,a)+\dfactor_{o}}$.\\
From equations \ref{eq:7.7} and \ref{eq:7.8}, we can conclude that for each state $s$ in $\Mdp$, there exist a pure schedule $\bstrategy$, and a threshold discount rate $\ctmdpthrate(s,\bstrategy(s)) $ such that for all $\ctmdpthrate(s,\bstrategy(s)) \leq \ctmdprate(s,\bstrategy(s)) < 1$, we have that
\begin{equation}\label{eq:7.9}
    \valuestar_{\ctmdprate(s,\bstrategy(s))}(\Mdp,s) \geq \valuesigma_{\ctmdprate(s,\sigma(s))}(\Mdp,s)   
\end{equation}
for all $\sigma \in \Sigma_{\Mdp}$.
As the set of states is finite in $\Mdp$, the threshold discount rate for $\Mdp$ is given by
    $\ctmdpthrate^{\Mdp} = \max_{(s,a) \in \states \times \actions} \ctmdpthrate(s,a)$.
\end{proof}
Thus, any Blackwell optimal schedule $\bstrategy$ in $\uMdp$ is also Blackwell optimal in $\Mdp$. The following lemmas show that $\sigma^*$ also maximises the expected average reward.
\begin{lemma}\label{lemma:3}
An optimal schedule maximising the expected average reward in $\uMdp$ also maximises the expected average reward in $\Mdp$.
\end{lemma}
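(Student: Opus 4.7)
The plan is to chain Lemmas~\ref{lemma:1} and \ref{lemma:2} with an Abelian-limit argument to transfer average-reward optimality from $\uMdp$ to $\Mdp$.

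First, I would use Lemma~\ref{lemma:1} to obtain a pure schedule $\bstrategy$ that is Blackwell optimal in $\uMdp$ and, as stated there, maximises the expected average reward in $\uMdp$. Lemma~\ref{lemma:2} then lifts the Blackwell optimality of $\bstrategy$ from $\uMdp$ to $\Mdp$. I would next argue that Blackwell optimality in $\Mdp$ entails average-reward optimality in $\Mdp$: under any stationary schedule, the induced CTMC on a finite state space has a well-defined long-run average reward that is recovered as the Abelian limit $\avgobjective^{\Mdp[\sigma]}(s) = \lim_{\alpha \to 0^+} \alpha \cdot \discobjective^{\Mdp[\sigma]}(\alpha)(s)$. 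Passing to this limit in the Blackwell dominance $\discobjective^{\Mdp[\bstrategy]}(\alpha)(s) \geq \discobjective^{\Mdp[\sigma]}(\alpha)(s)$ (valid for all sufficiently small $\alpha$) preserves the inequality, so $\bstrategy$ maximises $\avgobjective^{\Mdp[\cdot]}(s)$ for every $s$.

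To upgrade from the existence of a common optimum to the statement that \emph{every} average-reward optimum in $\uMdp$ is also an optimum in $\Mdp$, I would exploit the pointwise identity $\discobjective^{\Mdp[\sigma]}(\alpha)(s) = \discobjective^{\uMdp[\sigma]}(\alpha)(s)$ of Proposition 11.5.1 in \cite{puterman2014markov} (already invoked in the proof of Lemma~\ref{lemma:2}) under the matching discount-rate transformation $\udrate = C/(C+\alpha)$. Taking the same Abelian limit on both sides yields $\avgobjective^{\Mdp[\sigma]}(s) = \avgobjective^{\uMdp[\sigma]}(s)$ for every stationary schedule $\sigma$. Since the sets of stationary schedules coincide in $\Mdp$ and $\uMdp$ and the average-reward functionals agree pointwise, any maximiser of $\avgobjective^{\uMdp[\cdot]}(s)$ is also a maximiser of $\avgobjective^{\Mdp[\cdot]}(s)$.

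The main obstacle I expect is justifying the Abelian limit $\avgobjective^{\Mdp[\sigma]}(s) = \lim_{\alpha \to 0^+} \alpha \cdot \discobjective^{\Mdp[\sigma]}(\alpha)(s)$ rigorously under the paper's continuous-time discounting convention with its mixture of a state-delay reward rate and a lump-sum action reward; one must verify the limit exists uniformly over stationary schedules and genuinely equals the $\liminf$-based definition of $\avgobjective$, which ultimately reduces to the classical finite-state CTMC ergodic theorem applied to each stationary-schedule-induced chain.
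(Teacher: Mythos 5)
Your second paragraph is where Lemma~\ref{lemma:3} is actually being proved, and its key step has a genuine gap: the pointwise identity $\avgobjective^{\Mdp[\sigma]}(s)=\avgobjective^{\uMdp[\sigma]}(s)$ does not follow from an Abelian limit of Proposition~11.5.1 of \cite{puterman2014markov}, and it is false in general. That discounted identity $\discobjective^{\Mdp[\sigma]}(\alpha)(s)=\discobjective^{\uMdp[\sigma]}(\alpha)(s)$ holds only when the uniformized model is equipped with the $\alpha$-dependent transformed one-step rewards $\Bar{\rho}(s,a)=\rho(s,a)\cdot\frac{\alpha+\lambda(s,a)}{\alpha+C}$ (exactly as in the proof of Lemma~\ref{lemma:1}), not with the reward function of $\Mdp$ carried over unchanged; so letting $\alpha\to 0^{+}$ on the right-hand side does not yield the average reward of a fixed CTMDP $\uMdp$ — the rewards themselves move with $\alpha$, and their limit is Puterman's average-reward data transformation $\bigl(\lambda(s,a)\,rew(s,a)+rew(s)\bigr)/C$. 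Concretely, if $\uMdp$ keeps the lump-sum action rewards of $\Mdp$, those rewards are collected at every decision epoch, which in $\uMdp$ occur at rate $C$ (the compensating self-loops are genuine transitions), so for a stationary $\sigma$ the two average rewards differ whenever action rewards are nonzero and rates are non-uniform. The correct relation, which the paper takes directly from Chapter~11.5.3 of \cite{puterman2014markov}, is the scaled identity $\avgrew^{\sigma}(\Mdp)=\avgrew^{\sigma}(\uMdp)\cdot C$ of \eqref{eq:7.10}. Your argument is repairable, since any schedule-independent positive scaling between the two functionals over the common set of pure schedules is enough for maximizers to transfer; but you must either track the reward transformation through the vanishing-discount limit or simply invoke the average-reward uniformization identity — which is all the paper does: its proof is a short direct argument combining \eqref{eq:7.10}, the existence of pure average-optimal schedules (Thm~11.4.6(d) of \cite{puterman2014markov}), and the equality of the pure-schedule sets, with no Blackwell optimality and no Tauberian argument.

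Two secondary points. First, your opening paragraph (Blackwell optimality of $\bstrategy$ in $\Mdp$ implies average-reward optimality in $\Mdp$ via the Abelian limit) proves the content of Lemma~\ref{lemma:4}, not of Lemma~\ref{lemma:3}; in the paper the implication is obtained the other way around, with Lemma~\ref{lemma:3} as an ingredient of Lemma~\ref{lemma:4}. Your route is legitimate only if the Tauberian step — existence of $\lim_{\alpha\to 0^{+}}\alpha\,\discobjective^{\Mdp[\sigma]}(\alpha)(s)$ and its identification with the $\liminf$-based $\avgobjective^{\Mdp[\sigma]}(s)$, uniformly enough to pass the Blackwell inequality to the limit — is actually discharged; you flag it as the main obstacle but do not carry it out, whereas the paper's citation-based proof avoids it entirely. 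Second, since the lemma speaks of optimal schedules without restricting to stationary ones, you also need that pure schedules attain the supremum of the expected average reward in both $\Mdp$ and $\uMdp$ (again Thm~11.4.6(d)); this is used explicitly in the paper's proof and should not be left implicit.
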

\begin{proof}
For a pure schedule $\sigma$ , the expected average reward in $\Mdp$ is denoted by $\avgrew^{\sigma}(\Mdp)$. From \cite{puterman2014markov}~Chap 11.5.3, we observe that for a pure schedule $\sigma$, 
\begin{equation}\label{eq:7.10}
    \avgrew^{\sigma}(\Mdp) = \avgrew^{\sigma}(\uMdp)\cdot C 
\end{equation}
Let $\sigma'$ be a pure schedule maximising the expected average reward in $\uMdp$, i.e, 
\begin{equation}\label{eq:7.11}
 \avgrew^{\sigma'}(\uMdp) = \sup_{\sigma \in \Sigma_{\uMdp}} \avgrew^{\sigma}(\uMdp)
\end{equation}
The set of pure schedules in $\Mdp$ and $\uMdp$ are equal and we know that there exists a pure schedule maximising the average reward in a CTMDP (\cite{puterman2014markov}~Thm 11.4.6(d)). Therefore, from equations \ref{eq:7.10} and \ref{eq:7.11} we can conclude that, 
\begin{equation}\label{eq:7.12}
    \avgrew^{\sigma'}(\Mdp) = \sup_{\sigma \in \Sigma_{\Mdp}} \avgrew^{\sigma}(\Mdp)
\end{equation}
Therefore, $\sigma'$ is an optimal schedule maximising the average reward in $\Mdp$.
\end{proof}
\begin{lemma}\label{lemma:4}
A Blackwell optimal schedule $\bstrategy$ in $\Mdp$ also maximises the average reward in $\Mdp$.
\end{lemma}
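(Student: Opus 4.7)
The plan is to combine Lemmas~\ref{lemma:1}--\ref{lemma:3} with a Tauberian-style step that forces every Blackwell optimal schedule in $\Mdp$ to share the same average reward. The key idea is that Blackwell optimality is a statement about equality of discounted values on an interval of discount parameters near the undiscounted limit, and such equality must propagate to equality of the average reward gain.

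First I would produce a \emph{canonical} witness. By Lemma~\ref{lemma:1}, there exists a pure schedule $\pi$ that is Blackwell optimal in $\uMdp$ and also maximises the expected average reward in $\uMdp$. Lemma~\ref{lemma:2} lifts Blackwell optimality of $\pi$ to $\Mdp$, and Lemma~\ref{lemma:3} lifts its average-reward optimality to $\Mdp$. Thus $\pi$ is simultaneously Blackwell optimal and average-reward optimal in $\Mdp$.

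Next, let $\bstrategy$ be any Blackwell optimal schedule in $\Mdp$. By definition, both $\bstrategy$ and $\pi$ attain the supremum $\sup_{\sigma} \discobjective^{\Mdp[\sigma]}(\dfactor)(s)$ for every state $s$ and every sufficiently small $\dfactor > 0$, so their discounted values coincide on a common open interval $(0, \dfactor_0)$. To promote this to equality of average rewards, I would route through the uniformized CTMDP via equation~\eqref{eq:7.8} and then to the companion DTMDP $\mathcal{N}$ via equation~\eqref{eq:7.4}: in $\mathcal{N}$ the Laurent expansion of the discounted value around $\beta = 1$ (the foundation of Puterman's theory of $n$-discount optimality in Chapter~10) identifies the coefficient of $(1-\beta)^{-1}$ with the gain, so agreement of discounted values on a left-neighbourhood of $1$ forces agreement of the gains. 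Pulling back via equation~\eqref{eq:7.10} then yields $\avgobjective^{\Mdp[\bstrategy]}(s) = \avgobjective^{\Mdp[\pi]}(s)$, and since $\pi$ is average-reward optimal in $\Mdp$ the lemma follows.

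The main obstacle is that the uniformization correspondences~\eqref{eq:7.4},~\eqref{eq:7.8}, and~\eqref{eq:7.10} are established only for pure schedules, whereas $\bstrategy$ need not be pure \emph{a priori}. I expect to handle this either by invoking Puterman~Thm~11.5.2(d) (which guarantees that the pointwise supremum in the discounted objective is attained by a pure schedule, so without loss of generality $\bstrategy$ may be taken pure), or by appealing directly to the Tauberian identity $\avgobjective^{\Mdp[\sigma]}(s) = \lim_{\dfactor \to 0^+} \dfactor \cdot \discobjective^{\Mdp[\sigma]}(\dfactor)(s)$, which holds for our finite-state, bounded-reward CTMDPs and transfers the equality of discounted values directly in $\Mdp$ without a detour through $\uMdp$ or $\mathcal{N}$.
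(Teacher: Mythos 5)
Your proposal is correct, and its first step is exactly the paper's proof: the paper simply chains Lemma~\ref{lemma:1} (the uniformization-derived Blackwell optimal schedule also maximises average reward in $\uMdp$) with Lemma~\ref{lemma:3} (average-reward optimality transfers from $\uMdp$ to $\Mdp$), tacitly reading $\bstrategy$ as that particular witness; this is all that Theorem~\ref{corollary:1} needs. Where you go further is the second half: you take an \emph{arbitrary} Blackwell optimal schedule in $\Mdp$, note that its discounted values agree with the witness's on an interval of discount parameters near the undiscounted limit, and push that agreement through equations~\eqref{eq:7.8} and~\eqref{eq:7.4} to the Laurent expansion in the companion DTMDP to equate the gains. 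This buys a proof of the lemma in its literal generality (every Blackwell optimal schedule is gain optimal, not just the constructed one) and makes explicit the mechanism by which Blackwell optimality forces average-reward optimality, at the cost of extra machinery the paper avoids. Two small caveats on your hedging paragraph: the worry about non-pure $\bstrategy$ is moot, since the paper \emph{defines} Blackwell optimality only for pure schedules, so you may drop both escape routes; and of the two, the ``WLOG pure'' replacement would not have been legitimate for a genuinely non-pure given schedule, while the Tauberian identity $\avgobjective^{\Mdp[\sigma]}(s)=\lim_{\alpha\to 0^+}\alpha\cdot\discobjective^{\Mdp[\sigma]}(\alpha)(s)$ holds for stationary pure schedules but for general history-dependent ones only as one-sided Abelian inequalities with the $\liminf$ definition of average reward, so it could not carry the argument on its own.
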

\begin{proof}
From Lemma \ref{lemma:1}, we know that $\bstrategy$ is an optimal schedule maximising the expected average reward in $\uMdp$. Lemma \ref{lemma:3} shows that if $\bstrategy$ is an optimal schedule maximising the expected average reward in $\uMdp$ then $\bstrategy$ also maximises the expected average reward in $\Mdp$. Therefore, we can conclude that $\bstrategy$ is an optimal schedule maximising the expected average reward in $\Mdp$.
\end{proof}
Lemma~\ref{lemma:2} and Lemma~\ref{lemma:4} gives us the following.
\paragraph{Theorem~\ref{corollary:1}.}For a CTMDP $\Mdp$, there exists a Blackwell optimal pure schedule $\bschedule$ and a threshold $0 \leq \ctmdpthrate^{\Mdp} < 1$ such that :
\begin{inparaenum}[(1).]
    \item For any discount-rate function $\ctmdprate$ where $\ctmdprate(s,a) \geq \ctmdpthrate^{\Mdp}$ for all valid state-action pairs $(s,a)$, the schedule $\bschedule$ is an optimal schedule maximising the expected discounted reward.
    \item The schedule $\bschedule$ also maximises the expected average reward.
\end{inparaenum}
\section{Proofs from Section ~\ref{sec:theorems&algo}}
\label{app:sat}
In this section, we give a detailed proof of Theorem~\ref{theorem:4.1}.
\paragraph{Theorem~\ref{theorem:4.1}.}There exists a threshold $\zeta' \in (0,1)$ such that for all $\zeta > \zeta'$, and for every state $s$, a schedule maximising the expected average reward in $t$ is 
\begin{inparaenum}[(1)]
\item an optimal schedule in the product CTMDP $\mathcal{M} \times \mathcal{A}$ from $s$ for satisfying the $\omega$-regular objective $\phi$.
Further, since $\oautomata$ is a GFM, we have that \item $\sigma$ induces an optimal schedule for the CTMDP $\mathcal{M}$ from $s$ with objective $\phi$.
\end{inparaenum}
\begin{proof}
For a given CTMDP $\Mdp$, an embedded MDP $\embeddedMdp$ of $\mathcal{M}$ is a discrete-time MDP of the form $\embeddedMdp = (\states, \initstate, \actions{}, \transP_{\Mdp})$, that is, the transition function of $\embeddedMdp$ is derived from the probability matrix of $\Mdp$.
As the set of states and enabled actions from each state are the same in $\Mdp$ and $\emdp$, the set of pure schedules in them are also same. 
%

Time does not play a role in the definition of pure schedules. Therefore, the probability of reaching a state $s$ from the initial state in a CTMDP $\Mdp$ under a pure schedule $\sigma$ is dependant only on the transition function of the embedded MDP $\emdp$. 
Now recall that for $\omega$-regular objectives, given a B\"{u}chi GFM, the states of the accepting condition of the product CTMDP need to be visited infinitely often.
As there exist optimal pure schedules for reaching such states, we get the following lemma for an $\omega$-regular objective $\phi$.

\begin{lemma} \label{lem:embedded}
There exists a pure schedule that maximises the probability of satisfying $\phi$ in $\mathcal{M} \times \mathcal{A}$ which also maximises the probability of satisfying $\phi$ in the embedded product DTMDP $\mathcal{M}_\mathcal{E} \times \mathcal{A}$.
\end{lemma}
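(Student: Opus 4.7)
The plan is to exploit the fact that the satisfaction of an $\omega$-regular objective depends only on the sequence of states visited, not on the dwell times between transitions. First I would observe that $\emdp \times \oautomata$ and $\Mdp \times \oautomata$ share the same state space, the same set of actions enabled at each state, the same initial state, the same accepting set $F^\times$, and crucially the same one-step transition probability matrix $P_{\Mdp \times \oautomata}$ obtained by normalizing the rates $\transR^\times$. The CTMDP merely decorates this common discrete skeleton with exponentially distributed dwell times.

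Next I would argue that the set of pure schedules is literally the same in both models: a pure schedule is a function from states to actions, independent of any time information. For a fixed pure $\sigma$ the induced probability measure on infinite state sequences is determined entirely by $P_{\Mdp \times \oautomata}$ and the choice $\sigma$, and hence coincides in the two models. Because the event ``$F^\times$ is visited infinitely often'' is measurable with respect to the $\sigma$-algebra generated by the state sequence $(X_j)_{j\ge 0}$ alone, it follows that
\[
\PSemSat^{\Mdp \times \oautomata}(s, \sigma) \;=\; \PSemSat^{\emdp \times \oautomata}(s,\sigma)
\]
for every pure $\sigma$ and every state $s$. Then by classical results on $\omega$-regular objectives in finite DTMDPs (end-component analysis, cf.\ \cite{alma991027942769706011}), there exists a pure schedule $\bar\sigma$ that is optimal in $\emdp \times \oautomata$ for $\phi$.

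It remains to rule out that some randomized or history-dependent schedule in $\Mdp \times \oautomata$ achieves a strictly higher satisfaction probability than $\bar\sigma$. The key observation is that for any (possibly time-dependent) schedule $\sigma'$ in the CTMDP, the marginal law of the state sequence $(X_j)_{j\ge 0}$ can be reproduced by a history-dependent schedule in $\emdp \times \oautomata$ that conditions only on the finite prefix of visited states and actions, because integrating out the dwell times in the product measure leaves a transition kernel that factors through $P_{\Mdp \times \oautomata}$. Hence
\[
\sup_{\sigma' \in \Sigma_{\Mdp \times \oautomata}} \PSemSat^{\Mdp \times \oautomata}(s,\sigma') \;=\; \sup_{\sigma' \in \Sigma_{\emdp \times \oautomata}} \PSemSat^{\emdp \times \oautomata}(s,\sigma') \;=\; \PSemSat^{\emdp \times \oautomata}(s,\bar\sigma),
\]
where the second equality uses optimality of $\bar\sigma$ in the embedded model. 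Combining with the pure-schedule equality gives $\PSemSat^{\Mdp \times \oautomata}(s,\bar\sigma) = \sup_{\sigma'} \PSemSat^{\Mdp \times \oautomata}(s,\sigma')$, establishing that the same pure $\bar\sigma$ is optimal in both models.

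The main obstacle will be formalizing the marginalization step, that is, showing rigorously that general CTMDP schedules offer no additional power for $\omega$-regular events. I would handle this by identifying the cylinder sets of the CTMDP probability space generated purely by $(X_j)_{j\ge 0}$, verifying that the family $\{F_j\}$ and hence the $\liminf$ event defining $\PSemSat$ lies in this sub-$\sigma$-algebra, and then invoking a standard projection argument to reduce any CTMDP schedule to an equivalent DTMDP schedule with identical satisfaction probability.
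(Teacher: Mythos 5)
Your proposal is correct and takes essentially the same route as the paper's proof: both rest on the observation that under a pure schedule the CTMC induced on $\Mdp\times\oautomata$ has the embedded chain of $\emdp\times\oautomata$ as its discrete skeleton, so the satisfaction probability (which depends only on the state sequence) coincides in the two models, and then invoke the existence of optimal pure schedules for the discrete $\omega$-regular problem. The only difference is that you spell out the marginalization argument showing that timed, history-dependent CTMDP schedules give no extra power for time-abstract events, a step the paper leaves implicit by directly appealing to the existence of pure schedules maximising the probability of reaching accepting end-components.
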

\begin{proof}
In order to prove this lemma, we consider the semantics of a CTMC.
In a CTMC, every state $s$ has an exit rate $\lambda_s$ such that after reaching state $s$, some time $t_s$ is spent in $s$ where $t_s$ is exponentially distributed with parameter $\lambda_s$, and an outgoing transition to a state $s'$ is taken according to the probability $\transP(s,s')$ of the underlying discrete Markov chain.
Note that given a state $s$, the probability of reaching $s$ from the initial state $\initstate$ of the CTMC thus solely depends on the underlying discrete Markov chain, and not on the exit rates of the states.
Now, for every pure schedule $\sigma$, the CTMC generated is $(\Mdp \times \oautomata)^{[\sigma]}$ and the underlying discrete Markov chain is $(\Mdp_\mathcal{E} \times \oautomata)^{[\sigma]}$.

Thus, the probability of reaching the accepting end-components are the same in both $(\Mdp \times \oautomata)^{[\sigma]}$ and $(\Mdp_\mathcal{E} \times \oautomata)^{[\sigma]}$.
The lemma follows since optimal pure schedules exist for a reachability objective, in particular, there exists pure schedules maximising the probability of reaching accepting end-components, and the set of pure schedules are the same in both $(\Mdp \times \oautomata)^{[\sigma]}$ and $(\Mdp_\mathcal{E} \times \oautomata)^{[\sigma]}$.
Note that time does not play a role in the definition of stationary schedules, that is, timed stationary schedules and time-abstract stationary schedules coincide.
\end{proof}
We have shown that there exists an optimal pure schedule maximising $\phi$ in both $\pmdp$ and $\Mdp_{\mathcal{E}} \times \oautomata$. With a similar argument it also follows that an optimal pure schedule maximising the probability to reach the sink $t$ in the CTMDP $\augmdp$ also maximises the probability to reach $t$ in the DTMDP $\augmdp_{\mathcal{E}}$.

Theorem 3 from \cite{HahnPSSTW19} gives us the existence of a threshold $\zeta' \in (0,1)$ and that for any $\zeta > \zeta'$, an optimal schedule maximising the probability of reaching the sink state $t$ from a state $s$ in the DTMDP $\augmdp_{\mathcal{E}}$ also maximises the probability of satisfying $\phi$ in $\Mdp_{\mathcal{E}} \times \oautomata$.

This leads to the following statement.
There exists a threshold $\zeta' \in (0,1)$ such that for all $\zeta > \zeta'$, and for every state $s$, a schedule $\sigma$ maximising the probability $p_s(\zeta)$ of reaching the sink in $\mathcal{M}_{\mathcal{E}}^{\zeta}$ is
\begin{inparaenum}[(1)]
\item an optimal schedule in the product CTMDP $\mathcal{M} \times \mathcal{A}$ from $s$ for satisfying the $\omega$-regular objective $\phi$, and
\item induces an optimal schedule for the CTMDP $\mathcal{M}$ from $s$ with objective $\phi$.
\end{inparaenum}

The reward machine defined gives a positive reward for each time unit spent in $t$. Therefore, we can conclude that any schedule that maximises the probability of reaching $t$ also maximises the expected average reward.

\end{proof}

\section{Proofs from Section~\ref{sec:d_time}}
\label{app:expt}
In this section, we provide a proof of Lemma~\ref{lemma:6.2}.
\paragraph{Lemma~\ref{lemma:6.2}.}For a product CTMDP $\Mdp \times \oautomata$ where $\oautomata$ is a \textbf{GFM} for an $\omega$-regular objective and for a schedule $\sigma$, the expected average reward obtained w.r.t. the reward function $r'$ is equal to the expected satisfaction time in $~{(\pmdp)}$ and there exist a pure schedule that maximises this.
\begin{proof}
Consider a run $r_{inf} = (s_1,t_1,a_1,s_2,t_2,a_2...)$ in $\Mdp \times \oautomata$ under a schedule $\sigma$ . The satisfaction time is defined as 
$$
    Sat^{\Mdp{\times}\oautomata}_{\oautomata}(s,\sigma) = \liminf_{n \rightarrow \infty} \frac{\sum_{i=1,s_i \in T}^{n} t_i}{\sum_{j=1}^{n}t_j}
$$.
The reward obtained in each state $s_i$ in $r_{inf}$ is $r'(s_i)\cdot t_i$ which is $t_i$ if $s_i \in T$, and 0 otherwise.
Therefore, the average reward obtained from $r_{inf}$ is 
$$  
        \avgreward(s) = \liminf_{n \rightarrow \infty} \frac{\sum_{i=1,s_i \in T}^{n} t_i}{\sum_{j=1}^{n}t_j}
    $$.
Thus, the average reward obtained is equal to the satisfaction time for every run in $\Mdp \times \oautomata$ and therefore, the expected average reward obtained w.r.t. the reward function $r'$ is equal to the expected satisfaction time.

For proving the second part of the lemma, we use the fact that there exists an optimal pure schedule that maximises the expected average reward for any rewardful CTMDP \cite{puterman2014markov}. Therefore, there exists a pure schedule that maximises the expected residence time of $T$ in $\Mdp \times \oautomata$.
\end{proof}

\section{Pseudocode of Algorithms}
\label{algo}
In this section, we provide the pseudocode of the RL algorithm for satisfaction and expectation semantics.
\subsection*{Satisfaction Semantics}
\begin{algorithm}[H]
\caption{Algorithm for satisfaction semantics}\label{algo:sat}
\hspace*{\algorithmicindent}\hspace{2mm}\textbf{Input:}  \text{Initial state $s_{0}$, GFM $A$, discount factor $\gamma$, reward function $rew$, number of episodes (ep-n) $k$, learning rate $\beta$}\\
\hspace*{\algorithmicindent}\hspace{2mm}\textbf{Output:}\text{ Schedule $\sigma$ converging to an optimal one}
\begin{algorithmic}[1]
\STATE Initialise $\Qf$ to all zeroes
\FOR{k episodes}
\STATE Initialise $s$ and $q$ to $s_0$ and $q_0$ respectively
\STATE Initialise $r$ to 0
\WHILE{r = 0}
\STATE Choose action $a$ using schedule derived from $\Qf$
\STATE Take action $a$, observe next state $s'$ and time $\tau$
\STATE Choose non-deterministic transition $t$ in $A$ using the derived schedule ($\epsilon$-greedy)
\STATE Take transition $t$ in $A$, observe next state $q'$
\STATE $r \leftarrow rew'(s,q,a)$
\STATE $V(s',q') \leftarrow \max\limits_{a' \in \actions} \Qf(s',q',a')$
\STATE $\Qf(s,q,a) \leftarrow (1-\beta) \Qf(s,q,a)  + \beta \bigl(r {+} e^{-\gamma\tau} V(s',q') \bigr)$
\STATE $s \leftarrow s'$
\STATE $q \leftarrow q'$
\ENDWHILE
\ENDFOR
\STATE Initialise $\sigma$
\FOR{each state $(s,q)$}
\STATE $\sigma(s,q) = \max\limits_{a \in \actions}\Qf(s,q,a)$
\ENDFOR
\end{algorithmic}
\end{algorithm}
Recall that the reward function $rew'$ is defined as:
\[
rew'((s,q),a) = \begin{cases}
1 \text{\quad with probability $1-\zeta$ if $(s,q)$ is} \\
\text{\quad \quad accepting} \\
0  \text{\quad otherwise}
\end{cases}
\]

\subsection*{Expectation Semantics}
\begin{algorithm}[H]
\caption{Algorithm for expectation semantics}\label{algo:expt}
\hspace*{\algorithmicindent}\hspace{2mm}\textbf{Input:}  \text{Initial state $s_0$, GFM $A$, discount factor $\gamma$, reward function $rew$, number of episodes (ep-n) $k$,} \\
\hspace*{\algorithmicindent}\hspace{12mm} \text{learning rate $\beta$, episode length (ep-l) $eplen$}\\
\hspace*{\algorithmicindent}\hspace{2mm}\textbf{Output:}\text{ Schedule $\sigma$ converging to an optimal one}
\begin{algorithmic}[1]
\STATE Initialise $\Qf$ to all zeroes
\FOR{$k$ episodes}
\STATE Initialise $s$ and $q$ to $s_0$ and $q_0$ respectively
\FOR{each $eplen$-length episode}
\STATE Choose action $a$ using schedule derived from $\Qf$ 
\STATE Take action $a$, observe next state $s'$ and time $\tau$
\STATE Choose non-deterministic transition $t$ in $A$ using the derived schedule ($\epsilon$-greedy)
\STATE Take transition $t$ in $A$ and observe the next state $q'$
\STATE $r \leftarrow rew((s,q),a,\tau)$
\STATE $V(s',q') \leftarrow \max\limits_{a' \in \actions} \Qf(s',q',a')$
\STATE $\Qf(s,q,a)  \leftarrow {(1-\beta)} \Qf(s,q,a)  + \beta \bigl(r {+} e^{-\gamma\tau} V(s',q') \bigr)$
\STATE $s \leftarrow s'$
\STATE $q \leftarrow q'$
\ENDFOR
\ENDFOR
\FOR{each state (s,q)}
\STATE $\sigma(s,q) = \max_{a \in \actions}\Qf(s,q,a)$
\ENDFOR
\end{algorithmic}
\end{algorithm}
Recall that he reward function $rew$ is defined based on $r'$, i.e,
$$
 rew((s,q),a,\tau) = \begin{cases}
 \tau & \text{if ($s,q$) is an accepting state}\\
 0 & \text{otherwise}
 \end{cases}
 $$

\end{document}